\newcommand*{\myfont}{\fontfamily{phv}\selectfont}
\DeclareTextFontCommand{\ggplotFont}{\myfont}
\DeclarePairedDelimiter{\ceil}{\lceil}{\rceil}
\theoremstyle{definition}
\DeclareMathOperator*{\E}{\mathbb{E}}
\theoremstyle{remark}
\newtheorem{remark}{Remark}
\newcommand\eps{\ensuremath{\epsilon}}
\DeclareMathOperator*{\sign}{sign}
\DeclareMathOperator*{\argmax}{arg\,max}
\newtheorem{theorem}{Theorem}[section]
\newtheorem{lemma}[theorem]{Lemma}
\newtheorem{corollary}[theorem]{Corollary}
\title{Generalized Probabilistic Bisection for Stochastic Root-Finding}
\author[1]{Sergio Rodriguez}
\author[1]{Mike Ludkovski}
\affil[1]{University of California, Santa Barbara. Department of Statistics and Applied Probability.}
\begin{document}

\maketitle

\abstract{We consider numerical schemes for root finding of noisy responses through generalizing the Probabilistic Bisection Algorithm (PBA) to the more practical context where the sampling distribution is unknown and location-dependent. As in standard PBA, we rely on a knowledge state for the approximate posterior of the root location. To implement the corresponding Bayesian updating, we also carry out inference of oracle accuracy, namely learning the probability of correct response. To this end we utilize batched querying in combination with a variety of frequentist and Bayesian estimators based on majority vote, as well as the underlying functional responses, if available. For guiding sampling selection we investigate both Information Directed sampling, as well as Quantile sampling. Our numerical experiments show that these strategies perform quite differently; in particular we demonstrate the efficiency of randomized quantile sampling which is reminiscent of Thompson sampling. Our work is motivated by the root-finding sub-routine in pricing of Bermudan financial derivatives, illustrated in the last section of the paper.}

	\section{Introduction}\label{sec:intro}
	
	Finding the root of a function, i.e.,~solving the equation  $h(x^*) = 0$ for $x^*$, is a fundamental problem in numerical optimization.  In the stochastic setting the function $h(\cdot)$ is not observed directly but is learned via a stochastic simulator (aka \textit{oracle}). Applications of such Stochastic Root Finding Problems (SRFP) \cite{Pasupathy:2011:SRP:1921598.1921603} are relevant to many scientific disciplines. For instance, see quantile estimation for bio-assay experiments~\cite{finney1952statistical}, quality and reliability improvement~\cite{joseph2002operating}, sensitivity experiments~\cite{never1994ad}, and adaptive control and signal processing~\cite{chen2006stochastic}.
	
	An important special case, which is the focus of this paper, concerns one-dimensional root-finding where $x^* \in \mathbb{R}$ is known to be unique \cite{waeber2013probabilistic}. For example, this is the case if $h$ is monotone.
	Recently, this setting was also encountered in the context of simulation-based methods for optimal stopping, see~\cite{gramacy2013sequential}. Thus, we consider a data generating process of the form
	\begin{equation}
	\label{eq:pba_simulator}
	Z(x_{n}) = h(x_{n}) + \epsilon(x_{n}),
	\end{equation}
	where without loss of generality $h$ is negative to the left of $x^*$ and positive for all $x > x^*$, and where moreover $\epsilon$ is assumed to be a symmetric zero-mean  random component, dependent on $x_{n}$ but independent of previous evaluations.
	Due to noise, the actual $h(\cdot)$ in \eqref{eq:pba_simulator} is unknown. This leads to two fundamental paradigms for inference of $x^*$. The first strategy is to learn $h(\cdot)$, i.e.,~build a surrogate $\hat{h}$ and then take $\hat{x} = \hat{h}^{-1}(0)$ to be the root of $\hat{h}(\cdot)$, obtained via  a standard deterministic root-finding method (say Newton's method if $\hat{h}'$ is also available). This offers an opportunity to import the vast machinery of emulation/meta-modeling/surrogate construction which is an extensive topic in simulation optimization and computer experiment literatures \cite{brochu2010tutorial}. Within this context, root-finding is equivalent to contour-estimation, i.e.,~learning the boundary of $\{ x : h(x) > 0\}$, see \cite{RanjanBingham08,Picheny12,gramacy2013sequential}. Unfortunately, a ``good'' representation for $\hat{h}$ usually  does not lead to tractable models for $\hat{x}$. For example, with a Gaussian Process (GP) model for $h$, the marginal $\hat{h}(x)$ is Gaussian for any fixed $x$, however there is no closed-form expression for the distribution of $\hat{h}^{-1}(0)$. In fact, GP's, like most surrogate frameworks, would not lead in general to a unique root.
	
	The above limitation points to the alternative of modeling  $x^*$ itself, with $h(\cdot)$ as a background latent object. This corresponds to root-finding \cite{Pasupathy:2011:SRP:1921598.1921603}. One promising recent solution is the Probabilistic Bisection Algorithm (PBA)~\cite{waeber2013probabilistic} which explicitly built a Bayesian representation for the posterior distribution of $x^*$ given the collected data, and established exponential convergence of the posterior  to a Dirac mass at the true $x^*$. Such explicit performance guarantees are highly desirable and have not been available via the response surface modeling approaches.
	
	The goal of SRFP is to efficiently learn $x^*$, interpreted as optimizing the simulation budget of calling \eqref{eq:pba_simulator} by judiciously selecting the sampling points  $x_{1:n}: = (x_{1}, x_2,\ldots,x_{n})$ at which to query the oracle. This problem falls under the general rubric of \textit{experimental design}~\cite{chaloner1995Bayesian}.
	One sequential design framework, known as \textit{Stochastic Approximation}, is to choose $x_{1:n}$ such that $x_{n} \rightarrow x^{*}$ in probability, which closely resembles the Newton-Raphson method for nonlinear root-finding~\cite{robbins1951stochastic}. Another sequential method is to leverage the classical bisection search. Namely, to obtain information about the root an oracle is queried regarding whether $x^*$ lies to the left or to the right of a given point $x$. In a noise-free setting, this allows to halve the search space at each iteration. To account for noise \cite{horstein1963sequential} introduced the PBA, which updates a probability density based on the history of previous sampling locations and corresponding oracle responses utilizing Bayesian methods.  Waeber et al. extended the PBA to the stochastic setting taking $x^{*}$ as the realization of a continuous random variable $X^{*}$~\cite{waeber2013probabilistic}. Specifically, the stochastic PBA works with the \textit{sign} of the noisy function evaluations,
	\begin{equation}
	\label{eq:pba_response}
	Y(x_{n}) := \sign \{Z(x_{n})\}.
	\end{equation}
	Given the unicity of the root $x^{*}$, the oracle response \eqref{eq:pba_response} is equivalent to reporting the \textit{direction} where $x^{*}$ is located with respect to the querying point $x_{n}$. The noise in $Z$ from \eqref{eq:pba_simulator} then translates into potentially inaccurate oracle directions. Namely,  the oracle \eqref{eq:pba_response} produces a correct sign only with probability $p(x_{n})$.

	The main hurdle for practical application of PBA is the requirement to know the statistical properties of the stochastic sampler. In a realistic context, the \textit{specificity} of the oracle, namely $p(x_{n})$, is unknown and spatially varying in $x_{n}$ (since it should depend on $h(x_{n})$ in \eqref{eq:pba_simulator}).
	Without further assumptions, the only way to employ PBA is to \emph{estimate} $p(x_{n})$. This was already noted in \cite{waeber2013probabilistic} and more recently in \cite{frazier2016probabilistic}, who used a hypothesis-testing-inspired procedure to learn $p(x_{n})$ on route to learning $X^*$. Specifically, they employed a Test of Power One (TPO)~\cite{siegmund1985sequential}, which relies on repeated sampling of \eqref{eq:pba_response} to effectively boost $p(x_{n})$ to a fixed accuracy level $\tilde{p}(x_{n})$. However, such boosting can be very expensive in the regime that $p(x_{n}) = 0.5 + \delta$ for small $\delta$. This highlights the second challenge with PBA: in the context of root-finding and \eqref{eq:pba_simulator}, for $x_{n}$ close to $x^*$ we have $p(x_{n}) \simeq 0.5$ which implies that the oracle is \emph{uninformative} in the neighborhood of the root. A na\"{i}ve implementation of PBA leads to sampling too close to $x^*$ and is not asymptotically convergent (in the sense of the posterior collapsing to a Dirac mass at $x^{*}$).
	Finally, when the oracle behavior is itself unknown, Bayesian updating is necessarily heuristic, and special care must be taken in constructing and propagating the knowledge state that is proxy for the true posterior of $X^{*}$. Our experiments show that this behavior is very delicate and must be taken into account to obtain robust search strategies.
	
	In this article, we resolve the above challenges by providing \emph{practically-minded} extension of the PBA. We construct a class of algorithms, called generalized PBA (G-PBA) that can (i) efficiently learn oracle properties; and (ii) efficiently aggregate collected information to construct a sequential design. Similar to \cite{waeber2013probabilistic} we rely on \emph{batched sampling} to learn $p(x_{n})$; however in contrast to the former strategy of boosting, we directly work with a location-dependent oracle. This is shown to be more efficient, in particular by providing a better control on how much to batch.  For the sequential design piece, our main contribution is to propose several heuristics which blend the concept of Information Directed Sampling with the median-sampling rule proven to be optimal in \cite{waeber2013probabilistic} (under very restrictive assumptions). Namely, we investigate a class of quantile-sampling strategies. We also show the effectiveness of randomization which turns out to be crucial in preventing uncontrolled error propagation in constructing the knowledge state.
	
	Our G-PBA schemes are generic and can be employed across a wide spectrum of SRFP's. In that sense, they make minimal assumptions about the underlying \eqref{eq:pba_simulator}. To illustrate this robustness we use G-PBA to learn the critical exercise threshold in the context of Regression Monte Carlo for Optimal Stopping. In that case, the behavior of \eqref{eq:pba_simulator} is highly non-standard, in particular strongly heteroskedastic and non-Gaussian, which makes standard statistical learning procedures for $\hat{h}$ very sensitive~\cite{ludkovskiJCF}. In contrast, the G-PBA is rather agnostic to these challenges, not least thanks to the batching sub-steps which allow the Central Limit Theorem (CLT) smoothing of all statistical anomalies.

	As in standard PBA, we work in the sequential paradigm, selecting the next sampling point  $x_{n+1}$ given the existing sampling locations $x_{1:n}$ and corresponding responses $Z_{1:n} := (Z_{1}(x_{1}),\ldots,Z_{n}(x_{n}))$. The sampling framework is based on a Bayesian perspective captured via a \emph{knowledge state} $f_n$ which intuitively corresponds to the posterior distribution $p(X^* | x_{1:n}, z_{1:n})$. The  knowledge state is then used for the twin purposes of providing an estimate $\hat{x}_n$ of $X^{*}$, and for guiding the sequential design. Note the distinction between  \textit{sampling} and \textit{estimation}: the locations $x_{1:n}$ determine the $Z_{1:n}$'s, but need not be identical to the estimates $\hat{x}_{1:n}$ for the root $X^{*}$.
	
	To select $x_{n+1}$ we utilize two different approaches which both take advantage of the full probabilistic description of the root $X^{*}$ via the knowledge state $f_{n}$. Firstly, we borrow the idea of Expected Improvement (EI), popularized in Bayesian optimization. EI  constructs a one-step information gain criterion and sets $x_{n+1}$ as the corresponding greedy maximizer.  Examples of EI functions include Efficient Global Optimization (EGO) \cite{JonesSchonlauWelch98}, Stepwise Uncertainty Reduction (SUR) \cite{ChevalierPicheny13}, Expected Quantile Improvement (EQI) \cite{PichenyGinsbourger13}, and Integrated Mean Squared Error (IMSE) \cite{GramacyLee09}.  Here again we contrast the function-view strategy of emulation, which quantifies the learning of $h(\cdot)$, with the root-view strategy that quantifies learning of $X^*$. For the former, despite some progress on building EI measures for the level-sets and graph of $h(\cdot)$ \cite{ChevalierPicheny13,AzzimontiBect16}, these metrics remain complex.  In our view this is a fundamental conceptual hurdle arising from the mismatch between the large model space for $h$, and the much simpler derived quantity, i.e.,~the root $x^*$, to be learned. Moreover, to our knowledge, existing emulators for $h$ have few tools to take advantage of the specific structure that arises in root-finding, first and foremost the fact that there is a unique $x^*$. In our running example of a GP emulator, it is very challenging to control the behavior of the level-set; see for example the ongoing efforts to build tractable monotone GP models \cite{riihimaki2010gaussian}. By explicitly targeting $X^*$ we seek the most direct path to developing effective rules for the sequential design~$x_{1:n}$.
	
	Secondly, we propose another class of sampling policies which do not make explicit use of a data acquisition function such as the strategies above, but rather use solely the state variable $f_{n}$ (and perhaps additional randomization) in order to select new samples. Namely, sampling locations  are \emph{quantiles} of $f_{n}$, i.e., $x_{n+1} := F_{n}^{-1}(q_{n})$,  where $F_{n}(\cdot)$ is the CDF of $f_{n}$ and $q_n \in (0,1)$ are the sampling quantiles, which can either be randomized or fixed.  For instance, $q_{n} \equiv 1/2 \ \forall n\ge 1$ corresponds to the classic PBA median-sampling strategy. At the other extreme, taking $q_{n}\sim \mathsf{Unif}(0,1)$ -- which closely resembles Thompson Sampling~\cite{russo2016information}; new locations are chosen according to the current likelihood of~$X^*$.
	
	As mentioned, our analysis has been motivated by the interest in applying PBA in the context of approximate dynamic programming (ADP), where the objective loss function is defined primarily in terms of $\|\hat{x}-X^*\|$ rather than in terms of $\|\hat{h} - h\|$ \cite{gramacy2013sequential}. Towards this end, we seek classification/root-finding frameworks in lieu of the standard value-function-approximation methods in ADP. Another motivation has been our prior experience in working with ``highly stochastic'' simulators, i.e.,~settings with low signal-to-noise ratio and non-Gaussian, non-constant simulation noise $\eps(x)$. Far from being a statistical anomaly, this context is ubiquitous in the stochastic simulation applications we envision. When the noise properties are ``non-standard'', effective learning of $h$ becomes much more challenging, which leads to a considerable modeling, and eventually computational, effort in building a robust $\hat{h}$. Much of this work is however likely to be misplaced for the root-finding objective which is asymptotically local and hence much less sensitive to the noise. Therefore, we again see the root-finding paradigm advantageous by being less reliant on limiting statistical assumptions. Simultaneously, the more obscured is $h$ by the simulation noise $\eps$, the more valuable is structural knowledge about $x^*$.
	
	To provide further context for our work, let us recapitulate our contributions relative to existing methods. On the one hand, compared to PBA, we work with \textit{unknown} and \textit{location-dependent} $p(x_{n})$, which indeed requires a complete re-imagining of the algorithm. Moreover, we focus on practical solutions that work well in non-asymptotic settings, i.e.,~for a limited budget of $T$  available oracle calls. In particular, we contrast our strategies with one of the proposals in~\cite{waeber2013probabilistic,frazier2016probabilistic} that TPO approach to learn $p(x_{n})$. As we show, while TPO enjoys nice theoretical properties and is a viable alternative in terms of its asymptotic behavior, it performs poorly for small $T$.
	
	On the other hand, compared to simulation optimization, we develop a root-finding procedure which is built around the notion of constructing an explicit posterior for the root, and hence primarily operates with the knowledge state rather than a surrogate for $h(\cdot)$. This allows us to obtain and monitor the (pseudo-) credible bands for $X^*$ which give sequential quantification of the learning performed by PBA. Finally, compared to level-set estimation, we develop a different type of Expected Improvement criteria which are linked to the posterior entropy of $X^*$ rather than a $h$-based statistic, grounding our method in a purely information-theoretic paradigm.
	
	\subsection{Generalized Probabilistic Bisection}
	A complete solution to the SRFP defined by \eqref{eq:pba_response} was provided by \cite{waeber2013bisection} under the key assumption that $p(x) \equiv p$ is a \textit{known} and $x$-independent constant. Namely, Waeber et al.~derived the equations for the posterior density
	\begin{align}\label{eq:fn}
	g_{n}(X^{*}):= p(X^{*} | y_{1:n-1},x_{1:n-1})
	\end{align}
	and then established that sampling at the posterior median, $x_{n+1}:= G^{-1}_{n}(0.5)$ for all $n = 1,2,\ldots$, is an optimal policy. More precisely, they proved that this sampling rule minimizes the expected Kullback-Leibler (KL) distance for its utility function, and most importantly achieves exponential convergence
	for the estimate $\hat{x}_{n} \equiv x_n$ towards  $x^*$, i.e.,~$|\hat{x}_{n} -x^{*}| = \mathcal{O}(e^{-\alpha n})$ with an explicit expression for $\alpha > 0$.
	This justifies its name, as the PBA manages to effectively reduce the interval containing $x^*$  by $\alpha\%$ at each stage. This result is truly impressive both given the noisy oracle replies and thanks to the simplicity of the sampling rule. Moreover, PBA exemplifies the Bayesian setup: $x_{n+1}$ is selected based on the information summarized by $g_n$, which also yields the point estimate~$\hat{x}_n$.
	
	Some partial results extending to the case where $p(x)$ is non-constant (but still known) were given in~\cite{waeber2011bayesian}. The crucial assumption that oracle properties, specifically $p(x)$ is known, is
	hard to justify in the context of unknown response $h(\cdot)$. For example, when the noise component is $\epsilon(x) \sim \mathsf{N}(0,\sigma^{2}(x))$, then $p(x) = \Phi \left( |h(x)|/\sigma(x) \right)$, where $\Phi(\cdot)$ is the standard Gaussian CDF, and therefore knowledge of $p(\cdot)$ is equivalent to knowing the signal-to-noise ratio --- a  rather unlikely proposition. At the same time, this assumption is critical to the performance: as we discuss below without further modifications the PBA might fail completely in the context of unknown $p(x)$. More sophisticated sampling strategies are needed to resolve this tension between exploitation and exploration.
	
	To generalize the ideas of PBA to the setting of \eqref{eq:pba_simulator} we introduce a  knowledge state,  $f_{n}$, that is recursively updated and used for acquiring new samples. The underlying philosophy is a  Bayesian formulation of SRFP, translating the task of learning the root into the language of ``beliefs'' encapsulated by $f_{n}$ and used to quantify (posterior) uncertainty about $X^*$. Intuitively, $f_n$ is a ``surrogate'' to the true posterior $g_n$ that is no longer attainable due to unknown $p(\cdot)$.
	
	A key ingredient of our approach is the use of \emph{replications}: repeatedly evaluating the oracle $K \in \mathbb{N}$ times at a fixed sampling location $x$. The latter procedure allows us to obtain a point estimate $\hat{p}(x)$ based on the  i.i.d. responses~$y_{1:K}(x):= (y_{1}(x),\ldots,y_{K}(x))$ observed at $x$, which is then used to update knowledge from $f_{n}$ to  $f_{n+1}^{(K)}$. Replicates decouple the problems of learning $X^{*}$ and of learning $p(\cdot)$; they also boost the signal-to-noise ratio which allows faster convergence at the macro-level. We remark that this strategy is a common one for dealing with heteroscedastic stochastic simulators, see the related ideas in Stochastic Kriging \cite{ankennman:nelson:staum:2010}. Our resulting G-PBA framework learns in \textit{parallel} $X^*$ and $p(\cdot)$ and is  summarized in Algorithm~\ref{alg:algo_generalized_PBA}.
	
	\IncMargin{1em}
	\begin{algorithm}[htb]
		\small
		\SetKwInOut{Input}{input}\SetKwInOut{Output}{output}
		\Input{Total query budget $T = N \times K$, where $K$ is the batch size\; Prior distribution $X^{*}
			\sim f_{0}$ on the root.}
		\For{$n \leftarrow 0,1, \ldots,N-1$}{
			Generate next sampling point $x_{n+1}$\;\nllabel{alg:sampling}
			Obtain the estimate $\hat{p}(x_{n+1})$ using the oracle responses $y_{1:K}(x_{n+1})$\;
			Update knowledge state to $f_{n+1}^{(K)} := \Psi_{K} \left(f_n^{(K)},x_{n+1},B_{K}(x_{n+1});\hat{p}(x_{n+1}) \right)$\;
			\nllabel{alg:updating}
		}
		\Return Root estimate $\hat{x}_{N}$; Knowledge state $f_{N}^{(K)}$.
		\caption{Generalized PBA.}\label{alg:algo_generalized_PBA}
	\end{algorithm}\DecMargin{1em}
	In order to implement Algorithm~\ref{alg:algo_generalized_PBA}, the G-PBA must specify:
	\begin{enumerate}[label=(\roman*),align=left]
		\item statistical procedure $\hat{p}(x_{n+1})$ for estimating $p(x_{n+1})$ at $x_{n+1}$.
		\item the mechanism to update knowledge states $\Psi_{K}: f_{n} \rightarrow f_{n+1}$;
		\item the rule $\eta$ for selecting $x_{n+1} =\eta(f_{n})$ given $f_{n}$;
	\end{enumerate}
	
	All three of the steps (i)-(iii) require novel analysis, and are a part of our contributions.
	
	\textbf{Statistical procedure for estimating $p(\cdot)$. } All three steps above require knowledge of $p(x)$, so proper inference of the latter is central to the G-PBA performance. Because the oracle is  ``democratic'': $p(x) \ge 0.5 \forall x$, there is a majority-vote property, whereby the estimate is based on the majoritatively observed sign of the oracle responses $Y_{1:K}(x)$. This introduces a fundamental bias which becomes especially significant when sampling close to $X^*$ ($|h(x)|$ is small and $p(x)\simeq 0.5$).
	
	In Section \ref{sec:knowledge_states} we investigate three types of $p$-estimators: frequentist based on majority proportion; Bayesian based on the posterior of $p$ given $Y_{1:K}$; and boosted which directly aggregate oracle responses to  construct a subsidiary signal whose \textit{specificity} is enhanced thanks to batching.
	
	\textbf{Knowledge Updating Procedure. } To update $f_n$ we then plug-in an estimated $\hat{p}(x)$ into a knowledge state transition function of the form  \begin{equation}\label{eq:updating_states}
	f_{n+1}^{(K)} := \Psi_{K} \left(f_n^{(K)},x_{n+1},B_{K}(x_{n+1});\hat{p}(x_{n+1}) \right), \quad  n=0,1,\ldots,N-1 \quad \mbox{ and} \ K \in \mathbb{N},
	\end{equation}
	where the sufficient statistic $B_K(x_{n+1})$ counts the number of \textit{positive} values among $Y_{1:K}(x_{n+1})$. The map \eqref{eq:updating_states} is the analogue of Bayesian updating when $p(x)$ is known . Note that the knowledge transition  $\Psi_{K}(\cdot)$ function is similarly batched, allowing us to make full use (while maintaining computational efficiency) of the sampled replicates. This aspect is fully addressed in Section \ref{sec:knowledge_states}.
	
	\textbf{Sampling Policies. }Third, to select the locations  $x_{n+1}$ at each $n = 0,\ldots,N-1$  we introduce several sequential sampling policies $\eta$. The first family of  \emph{Information Directed Sampling} uses an information gain function $I_{n,K}(x, f_n)$ to quantify the learning rate for $X^*$ if a new query batch is done at $x$. It is motivated by the optimality property of standard PBA in terms of maximizing the KL relative entropy between $f_n$ and $f_{n+1}$.
	
	The second family of \emph{Quantile Sampling} is motivated by the other aspect of PBA, namely of sampling at the {median} of the knowledge state. Letting $F_{n}$ be the CDF of $f_{n}$, we therefore propose to use the quantiles of $f_n$ for selecting the next $x_{n+1}$:
	\begin{equation}
	\label{eq:fn_policy}
	x_{n+1}^{Q} := F_{n}^{-1}(q_{n}), \qquad \mbox{where $q_{n} \in (0,1)$}.
	\end{equation}
	Another important computational adjustment that we entertain is an additional degree of randomization which serves to (a) alleviate the issue of error accumulation arising from uncertainty in estimating $p(x_n)$ and (b) enforce exploration of the state space in order to accelerate convergence to the true $X^{*}$. Our experiments demonstrate the value of such randomized sampling policies and can be viewed as analogues of similar stochastic searches in Bayesian optimization (such as Thompson sampling~\cite{russo2016information}). Full analysis of these designs is in Section \ref{sec:sampling}.
	
	Note that due to batching G-PBA will have two different time scales: macro-iterations $n=1,\ldots$ corresponding to the query locations $x_{1:N}$ and \textit{wall-clock time}, $T = K \times N$, which counts the total number of oracle queries and hence the overall computational expense. In  our G-PBA paradigm $K$ is fixed; an alternative is to make $K(x)$ adaptive, for example via a modified hypothesis testing procedure. This is the approach of the TPO-PBA in which the batch size $K(x)$ is an unbounded random variable that depends on $Z_{1}, Z_2, \ldots$.
	
	\textbf{Estimating the root $X^{*}$. }The final ingredient is the rule $\hat{x}_{n}$ to construct an estimate of the root based on $f_n$. In analogy to the classical PBA setting we utilize for the remainder of the article the posterior median (which we find is generally more robust than say the mean, as $f_n$ is often skewed or multi-modal),
	\begin{equation}
	\label{eq:median_fn}
	\hat{x}_{n} := \mbox{median}(f_{n}).
	\end{equation}
	The rest of the paper is organized as follows. 
	
	Section \ref{sec:InformationGainCriterion} combines the methods developed for sampling selection and updating knowledge states to obtain a collection of Generalized PBA algorithms. In particular, we mix-and-match the three key  ingredients of our G-PBA paradigm: sampling strategy $\eta$, statistical procedure for learning $p(\cdot)$ and batch size $K$, denoted abstractly by the triple $(\eta,\hat{p},K)$, to examine their performance. Our results demonstrate that the resulting root estimates $\hat{x}_{n}^{(K,\eta)}$ have indeed low average absolute error as well as minimal average uncertainty. Towards this end, in Section \ref{sec:Synthetic_Example} we numerically benchmark using several test cases: first with three synthetic examples, followed by a real-life example motivated by the optimal stopping problem presented in Section \ref{sec:osp}. Finally, Section~\ref{sec:conclusion} outlines the main results and contributions about the proposed heuristics for SRFPs and states some of the possible applications of the latter methods.
	
	\section{Knowledge States}
	\label{sec:knowledge_states}
	
	Consider a real-valued continuous response function $h:~[0,1]~\rightarrow~\mathbb{R}$. For concreteness we have re-scale the (compact) input space to the unit interval. The function $h$ is noisily sampled via the stochastic simulator~\eqref{eq:pba_simulator}. Let $X^{*}$ be the random root location and $x^{*}$ its realized value at which $h(x^{*}) = 0$. To learn  $x^{*}$, the PBA works with the signs $Y(x) := \sign Z(x)$, which  due to the stochastic nature of the responses $Z(x)$, are \textit{correct} with probability
	\begin{equation}
	\label{eq:ProbCorrect}
	p(x;x^{*}) = \mathbb{P}\left(Y(x) = \sign( x^{*} - x)\right)
	\end{equation}
	
	Assuming that \eqref{eq:ProbCorrect} is known, the next Lemma provides the analytical one-step updating equations for the posterior $g_n$ of $X^*$ defined  in \eqref{eq:fn}.
	\begin{lemma}
		\cite{waeber2013bisection}
		\label{lemma:pba}
		Let $G_{n}(\cdot)$ be the CDF of $g_{n}$ and $p(x)$ as in \eqref{eq:ProbCorrect}. Define
		\begin{equation}
		\gamma_{n}(x;p(x)) := \mathbb{P}\left( Y_{n}(x)=+1 |x_{1:n-1},y_{1:n-1} \right) = p(x)[1-G_{n}(x)] + [1-p(x)]G_{n}(x),
		\label{eq:gamma.x}
		\end{equation}
	
		Given a prior $g_0(\cdot)$ we havethe recursion for $n=0,1,\ldots$
		\begin{subequations}\label{eq:pba}\begin{align}
			\mbox{If} \ \  Y_{n}(x) = +1 \  \mbox{then} \ \  g_{n+1}(u) &=
			\frac{1}{\gamma_{n}(x;p(x))} \left[ p(x_{n}) 1_{\{ u\geq x \} } + (1-p(x)) 1_{\{ u < x \}} \right]g_{n}(u); \\
			\mbox{If} \ \  Y_{n}(x) = -1 \  \mbox{then} \ \  g_{n+1}(u) &= \frac{1}{1-\gamma_{n}(x;p(x))} \left[ (1-p(x)) 1_{\{u\geq x\}}+ p(x) 1_{\{u < x\}} \right]g_{n}(u).
			\end{align}\end{subequations}
	\end{lemma}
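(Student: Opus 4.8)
The plan is to observe that \eqref{eq:pba} is simply Bayes' rule with the correct likelihood, so the argument has three short steps: (1) identify the conditional law of the new oracle response $Y_{n}(x)$ given $\{X^{*}=u\}$; (2) compute the marginal probability of each possible response by integrating against $g_{n}$; (3) divide to obtain the updated density $g_{n+1}$.

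For step (1), fix the query point $x=x_{n}$. By unicity of the root, $\sign(x^{*}-x)=+1$ precisely when $x^{*}\ge x$ and $=-1$ when $x^{*}<x$. Combining the defining property \eqref{eq:ProbCorrect} of the oracle with the assumption in \eqref{eq:pba_simulator} that $\epsilon(x_{n})$ is independent of previous evaluations, the conditional likelihood of a positive response is
\[
\mathbb{P}\bigl(Y_{n}(x)=+1 \mid X^{*}=u,\, x_{1:n-1}, y_{1:n-1}\bigr) = p(x)\,1_{\{u\ge x\}} + (1-p(x))\,1_{\{u<x\}} =: \ell_{+}(u),
\]
and symmetrically $\mathbb{P}(Y_{n}(x)=-1 \mid X^{*}=u,\ldots) = (1-p(x))\,1_{\{u\ge x\}} + p(x)\,1_{\{u<x\}} = \ell_{-}(u) = 1-\ell_{+}(u)$. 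The point to make explicit here is that, conditional on $X^{*}$, the fresh response depends only on the current query $x$, so the past data $(x_{1:n-1}, y_{1:n-1})$ enters the updating solely through the prior $g_{n}$; this is exactly what makes $g_{n}$ a sufficient statistic.

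For step (2), since $g_{n}$ is the conditional density of $X^{*}$ given the history, the law of total probability gives
\[
\mathbb{P}\bigl(Y_{n}(x)=+1 \mid x_{1:n-1}, y_{1:n-1}\bigr) = \int \ell_{+}(u)\, g_{n}(u)\, du = p(x)\,[1-G_{n}(x)] + (1-p(x))\,G_{n}(x) = \gamma_{n}(x;p(x)),
\]
which is \eqref{eq:gamma.x}; here I use $\int_{u\ge x} g_{n} = 1-G_{n}(x)$ and $\int_{u<x} g_{n} = G_{n}(x)$, the endpoint being immaterial since $g_{n}$ is continuous, and hence $\mathbb{P}(Y_{n}(x)=-1\mid\cdots) = 1-\gamma_{n}(x;p(x))$. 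Finally, for step (3), Bayes' rule yields $g_{n+1}(u) = \ell_{\pm}(u)\,g_{n}(u)\big/\mathbb{P}(Y_{n}(x)=\pm 1\mid\cdots)$ according to the observed sign, and substituting the expressions from (1)--(2) gives precisely \eqref{eq:pba}; that $g_{n+1}$ integrates to one is automatic from the choice of normalizing constant. There is no real analytic obstacle: the only care needed is the bookkeeping of the conditioning in step (1) — pinning down that the new likelihood factor is conditionally independent of the history given $X^{*}$ — and, for full rigor, noting that the density–measure manipulations are valid because the standing assumption makes $g_{0}$ (hence each $g_{n}$) a continuous density on $[0,1]$.
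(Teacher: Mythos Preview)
Your argument is correct: the three-step Bayes computation (conditional likelihood given $X^{*}$, marginal via total probability, posterior via division) is exactly the right derivation, and your care about conditional independence of the fresh response given $X^{*}$ is the only substantive point that needs checking. The paper itself does not supply a proof of this lemma---it is quoted from \cite{waeber2013bisection}---so there is no in-paper argument to compare against; your write-up would serve perfectly well as the omitted proof.
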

	\begin{remark}
		If no prior knowledge about the root location $X^{*}$ is provided, then a sensible choice  is a vague prior $g_{0} = \mathsf{Unif}(0,1)$. The latter is also computationally convenient, since \eqref{eq:pba} then implies that $g_n$ will be piecewise constant $\forall n$, with discontinuities precisely at the sampled $x_{1:n}$. Therefore, storage and updating of $g_n$ becomes an $\mathcal{O}(n)$ operation in this setup.
	\end{remark}

	\subsection{Batched Querying}
	Abstractly, the updating \eqref{eq:pba} constitutes a \textit{knowledge transition function} $\Psi: g_n \mapsto g_{n+1},$ which takes as inputs the current knowledge state $g_{n}$,  the oracle response $Y_{n}(x)$ and its probability of correctness $p(x)$ when queried at the point $x\in (0,1)$. To learn $p(x)$, we employ batched queries, keeping the sampling location $x$ unchanged for $K \ge 2$ steps. Considering the resulting i.i.d.~sequence of oracle responses $Y_{1:K}(x)$, the knowledge state $g_{n}$  can be recursively computed by using the update~\eqref{eq:pba} $K$-times to obtain $
	g_{n+1}^{(K)}\equiv g_{n+K}$. Because $p(x)$ is the same across those updates, we can simply consider the total number of \textit{positive} oracle responses observed at $x$,
	\begin{equation}
	\label{eq:bk_postive_signs}
	B_{K}(x) := \textstyle \sum_{j=1}^{K}1_{\{Y_{j}(x) = +1\}},
	\end{equation}
	yielding an aggregated knowledge transition function from $g_{n}$ to $g_{n+1}^{(K)}$.
	
	\begin{theorem}
		\label{thm:batched_sampling}
		The \textit{batched} Bayesian updating, $\Psi_{K}$, which maps $g_{n}$ to $g_{n+1}^{(K)}$ is given by
		\begin{align}
		\label{eq:batched_updating_pba}
		g_{n+1}^{(K)}(u) &:= \Psi_{K}(g_{n}(u),x,B_{K}(x);p(x)) \nonumber \\
		&=  \left\{
		\begin{array}{ll}
		c_{n}^{-1}(x)\left[p(x)^{ B_{K}(x)} (1-p(x))^{K-B_{K}(x)} \right] \cdot g_{n}(u) & \mbox{if $0<x<u<1$,}\\
	 	& \\
		c_{n}^{-1}(x) \left[(1-p(x))^{ B_{K}(x)} p(x)^{K-B_{K}(x)}\right] \cdot 	 g_{n}(u) & \mbox{if $0<u\leq x<1$;}
		\end{array}
		\right.
		\end{align}
		for all $x\in(0,1)$ with normalizing constant $c_n(x) \equiv c(x,g_n(x),B_{K}(x),p(x))$
		\begin{equation}
		\label{eq:constant_batched}
		c_n(x):=\left[ (1-p(x))^{B_{K}(x)} p(x)^{K-B_{K}(x)} \right]G_{n}(x) +  \left[ p(x)^{B_{K}(x)} (1-p(x))^{K-B_{K}(x)} \right](1-G_{n}(x)).
		\end{equation}
	
	\end{theorem}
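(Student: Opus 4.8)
The plan is to derive \eqref{eq:batched_updating_pba} by iterating the single-step recursion \eqref{eq:pba} from Lemma~\ref{lemma:pba} exactly $K$ times, holding the query point $x$ fixed, and then collecting terms. The key observation is that each single-step update multiplies the current density by a factor that is piecewise constant in $u$, taking the value $p(x)$ (resp.\ $1-p(x)$) on $\{u \ge x\}$ and $1-p(x)$ (resp.\ $p(x)$) on $\{u < x\}$ according to whether the observed sign $Y_j(x)$ is $+1$ or $-1$, followed by renormalization. Since the partition $\{u \ge x\}$ versus $\{u < x\}$ is the same at every one of the $K$ steps, the $K$ multiplicative factors simply multiply together region-by-region: on $\{u \ge x\}$ the accumulated factor is $p(x)^{B_K(x)}(1-p(x))^{K-B_K(x)}$, where $B_K(x)$ counts the $+1$ responses as in \eqref{eq:bk_postive_signs}, and on $\{u < x\}$ it is $(1-p(x))^{B_K(x)} p(x)^{K-B_K(x)}$. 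This gives the unnormalized form of \eqref{eq:batched_updating_pba}; the normalization constant is then forced.

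Concretely, I would first argue that the order of the $K$ responses is irrelevant, so that one may assume WLOG the first $B_K(x)$ responses are $+1$ and the remaining $K - B_K(x)$ are $-1$; this commutativity follows because the intermediate normalizing constants $\gamma_n(\cdot)$ are scalars and the region-indicator factors commute. Then I would set up an induction on the number of applied updates $k = 0,1,\dots,K$: the inductive hypothesis is that after $k$ steps the density is proportional to $g_n(u)$ times $p(x)^{b}(1-p(x))^{k-b}$ on $\{u\ge x\}$ and $(1-p(x))^{b}p(x)^{k-b}$ on $\{u<x\}$, where $b$ is the number of $+1$'s seen so far. The inductive step is a direct application of \eqref{eq:pba}, multiplying by the appropriate indicator-weighted factor. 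Finally, taking $k = K$ and integrating over $[0,1]$ to find the proportionality constant yields $c_n(x)$ in \eqref{eq:constant_batched}: split the integral at $x$, use $\int_0^x g_n(u)\,du = G_n(x)$ and $\int_x^1 g_n(u)\,du = 1 - G_n(x)$, and collect.

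I expect the main (minor) obstacle to be bookkeeping rather than anything conceptual: one must be careful about which region gets the $p(x)$ power and which gets the $1-p(x)$ power, and about the boundary case $u = x$ (the statement assigns $\{u \le x\}$ to the ``left'' branch, matching the strict/non-strict inequalities in \eqref{eq:pba}), so the two branches in \eqref{eq:batched_updating_pba} line up with the $Y = \pm 1$ cases of the single-step rule. A secondary point worth stating explicitly is that because $g_0 = \mathsf{Unif}(0,1)$ (or any density) stays a genuine probability density under each normalized update, the iterated map is well-defined and $c_n(x) > 0$ whenever $0 < p(x) < 1$ and $g_n$ has mass on both sides of $x$; this is what guarantees $\Psi_K$ is well-defined. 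Everything else is the routine algebra of multiplying exponents and renormalizing.
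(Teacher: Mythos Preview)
Your proposal is correct and follows essentially the same route as the paper: both argue by induction on the number of batched updates, using the single-step rule \eqref{eq:pba} as the inductive step and exploiting that the partition $\{u\ge x\}$ versus $\{u<x\}$ is fixed across all $K$ updates so the multiplicative factors accumulate as powers of $p(x)$ and $1-p(x)$. The only cosmetic difference is that the paper carries the indicator $1_{\{B_K=j\}}$ through a sum over $j$ rather than invoking your WLOG reordering of the responses, but the underlying argument is identical.
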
	
	If we furthermore define the \textit{right scaling-factor}
	\begin{equation}
	\label{eq:rho}
	\rho(x,B_{K}(x);p(x)) := p(x)^{B_{K}(x)} (1-p(x))^{K-B_{K}(x)},
	\end{equation}
	then  the ratio
	\begin{equation}
	\label{eq:RightScalingFactor}
	R^{(K)}(g_{n},x,B_{K}(x);p(x)):= \rho(x,B_{K}(x);p(x))/c_{n}(x)
	\end{equation}
	completely specifies $\Psi_K$ in~\eqref{eq:batched_updating_pba}: given the total number of positive responses $B_{K}(x)$, the new posterior $g_{n+1}^{(K)}(u)$ is recovered by \textit{scaling} the values of $g_{n}(u)$ for $x\leq u$ by the factor $\rho$ from~\eqref{eq:rho} divided by the normalizing constant $c_n(x)$ from~\eqref{eq:constant_batched}. Hence, if  $B_{K}(x) > \left \lfloor{K/2} \right \rfloor$, i.e.,~there is favorable evidence that $x^{*}$ is rightwards of $x$, then the mass  of $g_{n+1}^{(K)}$ is shifted to the right of $x$. In the case where $p(x) \in \{0,1\}$,~\eqref{eq:rho} is defined by $\rho(x,B_{K}(x);p(x)):= p(x)$, which effectively reduces the support of $g_{n+1}^{(K)}$ by placing zero mass on one of the intervals that have $x$ as an end-point.
	
	For our G-PBA algorithms, neither \eqref{eq:pba} nor \eqref{eq:batched_updating_pba} are feasible, since they require the unknown $p(x)$. Nevertheless, to mimic the Bayesian updating paradigm we introduce an \emph{approximate} knowledge state $f_n$ which follows the transition function in \eqref{eq:batched_updating_pba} by plugging-in an appropriate estimate $\hat{p}_K(x)$
	\begin{equation}
	\label{eq:updating_batched}
	f_{n+1}^{(K)} := \Psi_{K}(f_{n},x,B_{K}(x);\hat{p}(x)); \ n=1,\ldots,N-1,\quad K\ge 2 \mbox{ and $x \in (0,1)$},
	\end{equation}
	where $\Psi_{K}$ is computed via Theorem~\ref{thm:batched_sampling}. Note that because \eqref{eq:updating_batched} is necessarily an approximation, $f_n$ does not match the true posterior $g_n$.
	
	\subsection{Frequentist and Bayesian Estimators for $p(\cdot)$}
	\label{sub:freqEstimators}
	
	The task in this section is to perform statistical inference on the unknown parameter $p(x)$ by using the batched i.i.d. responses $Y_{1:K}(x)$. The latter is done by re-parameterizing $p(x)$ via
	\begin{equation}
	\label{eq:ProbCorrectMax}
	p(x) 	= \max\{p^{+}(x),1-p^{+}(x)\}, \qquad \text{where} \quad p^{+}(x) := \mathbb{P}(Y(x) = +1)
	\end{equation}
	is the \textit{marginal} probability of observing a \textit{positive} sign at location $x$.
	For the remainder of the section we consider a single (macro)-iteration of the overall G-PBA, treating
	the sampling location $x$ as fixed and suppressed from the notation. To estimate $p$ we construct an estimator for $p^{+}$ and then plug into~\eqref{eq:ProbCorrectMax}.
	
	From a frequentist perspective, we recall that the Binomial proportion $B_{K}/K$ is an UMVUE for $p^+$ since $B_{K}\sim \mathsf{Bin}(K,p^{+})$. This yields the  \textit{majority proportion} estimator $\bar{p}$ obtained by replacing $p^{+}$ by $B_{K}/K$ in~\eqref{eq:ProbCorrectMax}:
	\begin{equation}
	\label{eq:emp_prop_estimator}
	\bar{p}\equiv \bar{p}(B_{K}) := \max\left\{B_{K}/K,1-B_{K}/K\right\}.
	\end{equation}
	
	In Appendix~\ref{prop:bias_p}, we show that $\E_p[\bar{p}] > p$ is necessarily biased high as soon as $p>1/2$. Intuitively, the bias  in \eqref{eq:emp_prop_estimator} is due to the possibility that the majority vote points in the wrong direction.
	
	An alternative is to assign a prior for $p$ and then construct a posterior  based on the likelihood provided by the batched responses $Y_{1:K}$. Using \eqref{eq:emp_prop_estimator}
	yields the respective conditional likelihood of $\bar{p}$ as  (see Appendix \ref{lemma:likelihood_maj_prop}): 
	\begin{align}\label{eq:P-likelihood}
	\mathbb{P}_{p}(\bar{p}(B_{K})= j/K) = \left\{
	\begin{array}{ll}
	\mathsf{Bin}(j;K,p) + \mathsf{Bin}(j;K,1-p), & \mbox{$j=0,1,\ldots, (\ceil{K/2}-1)$;}\\
	 & \\
	\mathsf{Bin}(K/2;K,p), & \mbox{$j=\ceil{K/2}$;}
	\end{array}
	\right.
	\end{align}
	where $\ceil{a}$ is the ceiling function, and $
	\mathsf{Bin}(j;K,p^{+})$ is the pmf of $\mathsf{Bin}(K,p^{+})$ evaluated at $j = 0,\ldots,K$. Assuming a vague prior $p \sim \mathsf{Unif}(1/2,1)$
	we then obtain explicitly the
	posterior density $\pi(p|\bar{p})$.
	
	\begin{theorem}
		\label{thm:posterior_p_pk}
		Suppose that $p$ has prior density $\pi_0(p) = 2\cdot 1_{\{p\in [1/2,1]\}}$. Then, for $K\geq 2$, the posterior density of $p$ conditioning on the majority proportion $\bar{p}(B_{K})=j/K$ is given by
		\begin{equation}
		\label{eq:Posterior_pdf_p}
		\pi(p| j/K) \propto \left\{
		\begin{array}{ll}
		p^{j}(1-p)^{K-j} +(1-p)^{j}p^{K-j}, & \mbox{if $j=0,1,\ldots, (\ceil{K/2}-1)$;}\\
		& \\
		p^{K/2}(1-p)^{K/2}, & \mbox{if $j=\ceil{K/2}$.}
		\end{array}
		\right.
		\end{equation}
	\end{theorem}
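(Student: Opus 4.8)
The plan is to obtain \eqref{eq:Posterior_pdf_p} by a direct application of Bayes' rule, exploiting the fact that the prior $\pi_0$ is flat on its support. Writing $\pi(p\mid j/K)\propto \mathbb{P}_p(\bar p(B_K)=j/K)\,\pi_0(p)$ and recalling that $\pi_0(p)=2\cdot 1_{\{p\in[1/2,1]\}}$ is constant on $[1/2,1]$, the prior contributes only a $p$-independent factor, so the posterior is proportional to the conditional likelihood \eqref{eq:P-likelihood} restricted to $p\in[1/2,1]$; the support constraint $p\in[1/2,1]$ appearing in \eqref{eq:Posterior_pdf_p} is simply inherited from the support of $\pi_0$.

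First I would substitute the non-tie branch of \eqref{eq:P-likelihood}: for $j=0,1,\ldots,\ceil{K/2}-1$, expanding the two binomial pmfs gives $\mathsf{Bin}(j;K,p)+\mathsf{Bin}(j;K,1-p)=\binom{K}{j}\big[p^{j}(1-p)^{K-j}+(1-p)^{j}p^{K-j}\big]$. The two terms share the same combinatorial prefactor $\binom{K}{j}$, which does not depend on $p$ and is therefore absorbed into the proportionality constant, producing the first line of \eqref{eq:Posterior_pdf_p}. Next, for the tie value $j=\ceil{K/2}$ (which arises only when $K$ is even, so $\ceil{K/2}=K/2$), the likelihood is $\mathsf{Bin}(K/2;K,p)=\binom{K}{K/2}p^{K/2}(1-p)^{K/2}$, and discarding the constant $\binom{K}{K/2}$ yields the second line of \eqref{eq:Posterior_pdf_p}. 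This completes the argument.

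There is no substantial obstacle in the theorem itself: it is a one-line Bayes computation followed by the elimination of $p$-free binomial coefficients, and the only bookkeeping point is to note that the two terms in the non-tie branch carry the \emph{same} coefficient $\binom{K}{j}$, so it genuinely factors out (this in turn rests on the $p\leftrightarrow 1-p$ symmetry built into the max-reparameterization \eqref{eq:ProbCorrectMax}). The genuine work — deriving the majority-proportion likelihood \eqref{eq:P-likelihood} from $B_K\sim\mathsf{Bin}(K,p^+)$ by tracking which values of $B_K$ map to a given value of $\bar p$ — is the real prerequisite for this theorem and is carried out separately in the appendix.
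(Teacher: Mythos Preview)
Your proposal is correct and follows essentially the same route as the paper's own proof: apply Bayes' rule, absorb the flat prior $\pi_0$ into the proportionality constant, substitute the majority-proportion likelihood \eqref{eq:P-likelihood}, and drop the $p$-free binomial coefficients. If anything, your write-up is slightly more complete, since you treat the tie case $j=\ceil{K/2}$ explicitly while the paper's proof only spells out the non-tie branch.
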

	Proof of Theorem \ref{thm:posterior_p_pk} is included in Appendix~\ref{sub:ProofsAndResults}.
	
	\begin{remark}
		Other priors (e.g.~location-dependent) for $p$ can be entertained. The $\mathsf{Uniform}$ choice is convenient both as a vague prior, and due to it matching the conjugate Beta-Binomial updates.
	\end{remark}
	
	Figure~\ref{fig:updating_comparison} shows the theoretical {expected} posterior density,
	$\hat{\pi}(p;x):= \mathbb{E}_{p^{+}}^{B_{K}}[\pi(p|\bar{p}(B_{K}))]$, obtained after averaging the posterior~\eqref{eq:Posterior_pdf_p} with respect to $B_{K}(x)\sim\mathsf{Bin}(K,p^{+}(x))$  for batch size values $K\in \{50,100,250,500\}$ and locations $x>x^{*}$ so that  $p(x) \in \{0.5,0.55,0.60,0.70\}$, using the test function~\eqref{eq:g-ex1} presented in Section~\ref{sec:Synthetic_Example} where $x^{*}=1/3$. It namely shows that posterior  is {unimodal} around the true $p(x)$; furthermore the posterior predictably tightens as $K$ increases locating most of the posterior mass around the true $p(x)$-value.
	
	\begin{figure}[ht]
		{
			\centering
			\includegraphics[width=0.90\textwidth]{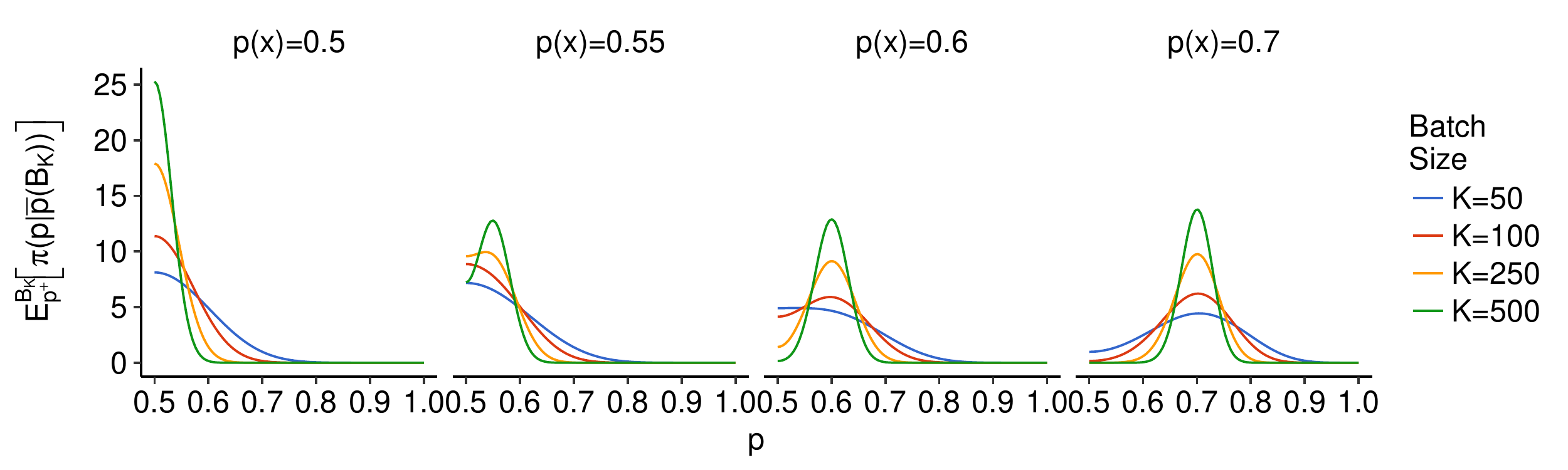}
			\caption{Expected posterior pdf $\hat{\pi}(p;x)$ obtained with respect to $B_{K}(x) \sim \mathsf{Bin}(K,p^{+}(x))$ for locations $x$ so that $p(x) \in \{0.50,0.55,0.60,0.70\}$ (columns) and batch size $K \in \{50,100,250,500\}$ (lines). \label{fig:updating_comparison}}
		}
	\end{figure}
	
	With $\pi(\cdot|\bar{p}(B_{K}))$ in closed-form, we can obtain a variety of estimators $\hat{p}_{\mathscr{L}}(\bar{p})$ by minimizing the Bayesian {posterior expected loss} for a given \textit{loss function} $\mathscr{L}(p,\hat{p})$. Namely,
	\begin{enumerate}[label = (\roman*)]
		\item \textit{posterior mode} based on $\mathscr{L}_{0}(p,\hat{p}) := 1_{\{|\hat{p} - p|>\epsilon, \epsilon>0\}}$ (taking $\epsilon \downarrow 0$ as $\pi(p|\cdot)$ is unimodal),
		\begin{equation}
		\label{eq:posterior_mode}
		\hat{p}_{\mathscr{L}_{0}}(\bar{p}) = \mbox{mode}\ \pi(p|\bar{p});
		\end{equation}
		\item \textit{posterior median} based on the ${L}_1$ loss $\mathscr{L}_{1}(p,\hat{p}) := |p - \hat{p}|$:
		\begin{equation}
		\label{eq:posterior_median}
		\hat{p}_{\mathscr{L}_{1}}(\bar{p}) = \mbox{median}\
		\pi(p|\bar{p}),
		\end{equation}
		\item and \textit{posterior mean} based on the $L_2$ loss $\mathscr{L}_{2}(p,\hat{p}) := (p - \hat{p})^2$:
		\begin{equation}
		\label{eq:posterior_mean}
		\hat{p}_{\mathscr{L}_{2}}(\bar{p}) = \mbox{mean}\ \pi(p|\bar{p})
		\end{equation}
	\end{enumerate}
	
	\begin{remark}
		Practically, \eqref{eq:posterior_mode} and \eqref{eq:posterior_median} have to be computed numerically, whereas \eqref{eq:posterior_mean} is computed in closed form as stated in Appendix~\ref{cor:posterior_mean}.
	\end{remark}
	
	The above Bayes estimators depend on \textit{four} different parameters: the sampling location $x$, realized number of positive responses at $x$ summarized via the majority proportion $\bar{p}(B_{K}(x))$; the batch size $K$ and the loss function $\mathscr{L}$. Whenever necessary we denote such dependency explicitly by $\hat{p}_{\mathscr{L}}(\bar{p}(B_{K}(x)))$.

	\begin{figure}[ht]
		{
			\centering
			{\includegraphics[width=0.75\textwidth]{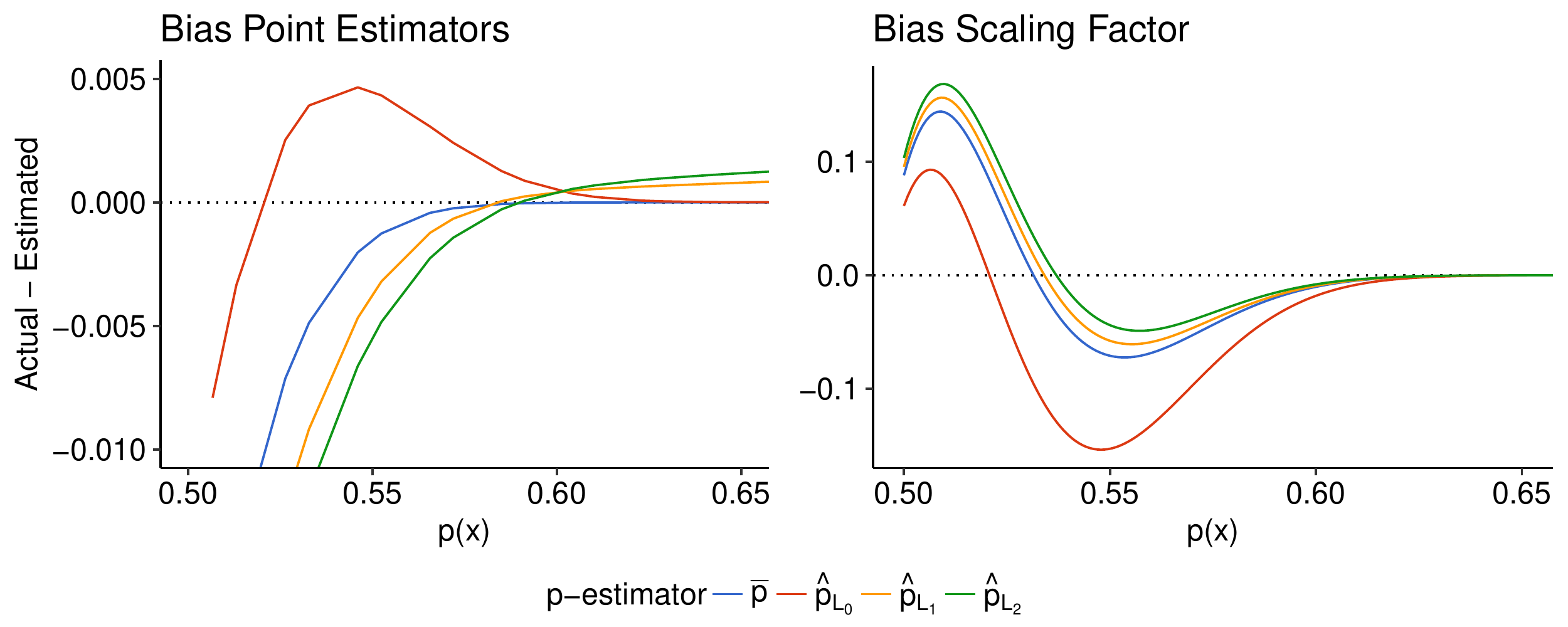}
				\caption{\emph{Left:} Expected bias of $\hat{p}$-estimators with respect to the number of positive responses $B_{K} \sim \mathsf{Bin}(K,p^{+})$. \emph{Right:} Expected right scaling factor $\hat{R}^{(K)}(f_{0},x,\hat{p})$ computed given $f_{0}\sim \mathsf{Unif}(0,1)$ and several locations $x>x^{*}$ so that $p(x)\in (0.50,0.70)$ ($x$-axis). Both panels are for $K=250$.
					\label{fig:biasPointEstimators}}
		}}
	\end{figure}

	The left panel of Figure~\ref{fig:biasPointEstimators} shows the theoretical expected bias $Bias_{p}(\hat{p}(x)) := \mathbb{E}_{p^{+}}^{B_{K}}[p(x) - \hat{p}(B_{K}(x))]$ corresponding to the estimators \eqref{eq:emp_prop_estimator}, \eqref{eq:posterior_mode}, \eqref{eq:posterior_median} and \eqref{eq:posterior_mean}; for $K=250$ and $p \in (0.5,1)$. Note that as $p \downarrow 0.5$, all procedures overestimate the true $p$, highlighting the difficulty to estimate $p(x)$ when $x\simeq x^*$. Of course, this issue is mitigated as batch size $K$ increases. The procedures which best approximate $p$ when $p\simeq 1/2$ are the \textit{posterior mode}, $\hat{p}_{\mathscr{L}_{0}}$, and the empirical majority proportion $\bar{p}$. However, as the true $p$ increases, $\hat{p}_{\mathscr{L}_{0}}$ underestimates $p$ (the bias increases), whereas the bias in the empirical majority proportion decays uniformly. Conversely, both the posterior mean $\hat{p}_{\mathscr{L}_{2}}$ and median $\hat{p}_{\mathscr{L}_{1}}$ overestimate when $p(x)\downarrow 1/2$ and underestimate it when $p(x)\uparrow 1$.

	\subsection{Bias in Knowledge States}\label{sub:biasAnalysisPBA}
	
	Recall that the key component about the update $f_{n+1}^{(K)}$ is given by the \textit{right-scaling factor}~\eqref{eq:RightScalingFactor} since it condenses all information needed in order to recover $f_{n+1}^{(K)}$ given $f_{n}$. The average scaling factor integrated against the distribution of $B_K$ is $\hat{R}^{(K)}(f_{0},x;\hat{p}) := \mathbb{E}_{p^{+}}^{B_{K}}[R^{(K)}(f_{0},x;B_{K},\hat{p}(B_{K}))]$, where $B_{K}\sim \mathsf{Bin}(K,p^{+})$. The right panel of Figure~\ref{fig:biasPointEstimators} shows the expected right-scaling factor obtained given a Uniform prior $f_{0}$ over $(0,1)$ and updating locations $x_{1}>x^{*}$ labeled via their $p(x_1)$ (x-axis). Since $x_{1}>x^{*}$, the right-scaling factor is expected to be close to zero when $p(x_1) \gg 0.5$ (since the updated $f_{1}$ would have fewer mass to the right of $x_{1}$) and conversely $\hat{R}^{(K)}(f_{0},x_1) \uparrow 1$ as $p(x_1) \downarrow 0.5$ (i.e.,~$x_{1}$ approaches the root $x^{*}$). We observe that in the latter setting, all four statistical procedures for  $\hat{p}$ tend to \textit{overestimate} the true right-scaling factor (the expected difference $R-\hat{R}$ is negative), meaning that there is ``overconfidence'' that $x^{*}$ is located to the right of $x_{1}$ even though in fact $p(x_{1})\cong 1/2$. In particular, the two statistical  procedures which seem to best resemble the true right-scaling factor when $x_{1}\simeq x^{*}$  are the \textit{ posterior mode}, as well as the \textit{empirical majority proportion}. Conversely, when the updating location $x_{1}$ is such that $p(x_{1})>1/2$, we see that all procedures provide an accurate description of the updated knowledge state at time $n=1$, especially for large values of $K$.
	
	The approximated posterior $f_{n+1}^{(K)}$ differs relative to the true posterior $g_{n+1}^{(K)}$
	due to the fact that $f_{n}$ utilizes the estimated $\hat{p}(x_{n+1})$ whereas $g_{n}$ uses the true  $p(x_{n+1})$. Figure~\ref{fig:PosteriorDensity} illustrates how the  bias in $\bar{p}$ induces over/under confidence when comparing the knowledge state $f_N$ vis-a-vis the ground truth $g_N$.
	Starting with a $f_0, g_0 \sim \mathsf{Unif}(0,1)$ prior, we compare the true posterior $g_{n}^{(K)}$ and  its approximation $f_{n}^{(K)}$ for $n\in \{1,2,3\}$ and $K =10$,  updated using the (arbitrary) locations $x_{1}=0.5$, $x_{2} = 0.4$ and $x_{3} = 0.2$, and the empirical proportion estimator $\bar{p}$ for the synthetic example~\eqref{eq:g-ex1}. Note that  the first two sampling locations $x_{1:2}$ are to the right of $x^{*}=1/3,$ whereas $x_{3}$ is leftwards of $x^{*}$.
	
	\begin{figure}[ht]
		{
			\centering
			\includegraphics[width=0.9\textwidth]{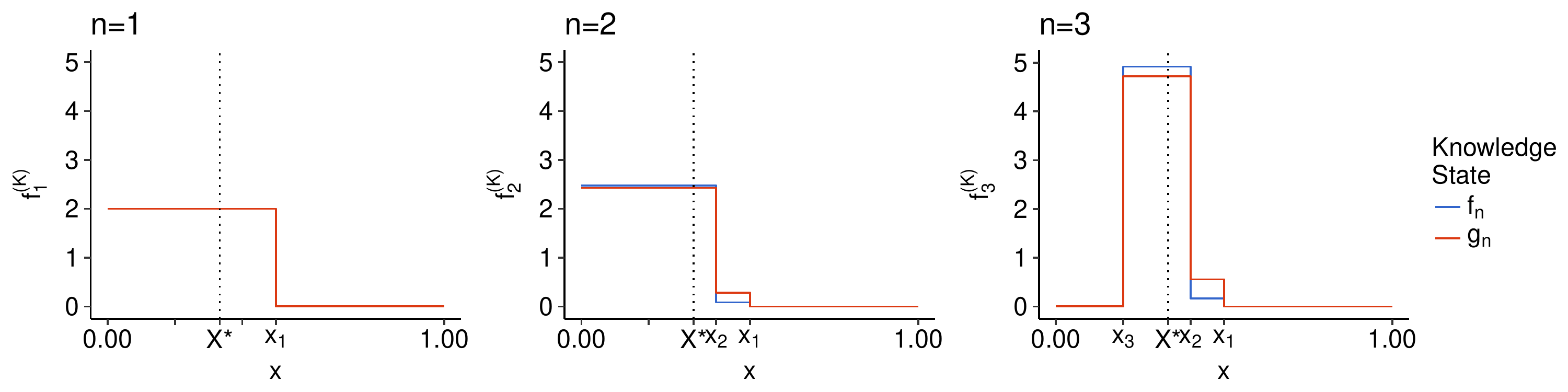}
			\caption{True and approximated knowledge states with the empirical proportion estimator $\bar{p}$ using three sampling locations $x_{1:3} = (0.5,0.4,0.2)$  and $K = 10$, using the linear function~\eqref{eq:g-ex1} with $x^{*}=1/3$.
				\label{fig:PosteriorDensity}}
		}
	\end{figure}
	
	\subsection{Aggregation of responses}
	
	An alternative strategy for updating the knowledge state is to build a subsidiary statistic from the i.i.d.~$Y_{1:K}$'s,  whose specificity is \textit{boosted} thanks to the batching. In other words, instead of using the $K$-step update $\Psi_K$ with $p$, we utilize a 1-step update $\Psi_1$ with an adjusted probability of correct response $\mathscr{P}$. Consider \textit{majority-vote} statistic $\mathscr{M}_{K}(x):=1_{\{B_{K}(x) > \ceil{K/2}\}}$~\cite{lam1994theoretical}. Then
	
	\begin{equation}
	\label{eq:px_majority_2}
	\mathscr{P}_{\mathscr{M}_{K}}(p) := \mathbb{P}_{p}( \mathscr{M}_{K}(x)  = 1_{\{x > x^*\}} ) = \textstyle \sum_{j=\ceil{K/2}}^{K}{K\choose j} p^{j}(1-p)^{K-j}.
	\end{equation}
	Substituting an estimate $\hat{p}$ in~\eqref{eq:px_majority_2} then yields $\mathscr{P}_{\mathscr{M}_{K}}(\hat{p}) = \textstyle \sum_{j=\ceil{K/2}}^{K} {K\choose j}   \hat{p}^{j}(1-\hat{p})^{K-j},$
	
	and the boosted update rule
	\begin{equation}
	\label{eq:updating_maj}
	f_{n+1}^{(K)} = \Psi_{1} \bigl(f_{n},x_{n+1},\mathscr{M}_{K}(x_{n+1});\mathscr{P}_{\mathscr{M}_{K}}(\hat{p}(x_{n+1}))\bigr).
	\end{equation}
	Note that since $\mathscr{M}$ only uses limited information about $B_K$, it is not sufficient for learning $p$. Consequently, the resulting knowledge state is not directly comparable to the full Bayesian posterior $g_n$; the hope is that through majority boosting we filter ``noise'' in $B_K$ and hence mitigate the bias in $\hat{p}$.

	\textbf{Aggregation of Functional Responses.} Assuming that the functional responses~\eqref{eq:pba_simulator} are available, another possibility for updating the knowledge state $f_{n}$ is to use the actual functional values $Z_{1:K}(x_{n+1})$ via the signal
	
	\begin{equation}
	\label{eq:consensus_signal_CLT}
	\mathscr{S}_{K}(x) := 1_{\{ \sum_{j=1}^{K}Z_{j}(x)>0\}}.
	\end{equation}
	By the CLT $\mathscr{P}_{\mathscr{S}_{K}}(h(x),\sigma(x)) :=   \mathbb{P}_{h,\sigma}(\mathscr{S}_{K}(x) =1_{\{x< x^{*}\}}) \simeq \Phi(\sqrt{K}|h(x)|/\sigma(x))$, where $\sigma^{2}(x)$ is the location-dependent
	variance of $\epsilon(x)$.  Observe that $\mathscr{P}_{\mathscr{S}_{K}}(h(x),\sigma(x))$ no longer depends on $p(x)$ but on the signal-to-noise ratio $h(x)/\sigma(x)$. A natural estimator is then
	\begin{equation}
	\label{eq:Prob_correct_clt}
	 \mathscr{P}_{\mathscr{S}_{K}}(\hat{h}_{K},\hat{\sigma}_{K})  =
	\Phi(\sqrt{K}|\hat{h}_{K}(x)|/\hat{\sigma}_{K}(x)); \quad \textstyle \hat{h}_{K}:= \frac{1}{K} \sum_{j=1}^{K}Z_{j} \  \mbox{and} \  \hat{\sigma}_{K}^{2}:= \frac{1}{K-1}\sum_{j  = 1}^{K}(Z_{j}-\hat{h}_{K})^{2}.
	\end{equation}
	
	Using the functional responses, the updated $f_{n+1}^{(K)}$ is thus computed using $\mathscr{S}_{K}(\cdot)$ via
	\begin{equation}
	\label{eq:updating_clt}
	f_{n+1}^{(K)} = \Psi_{1}\left(f_{n},x_{n+1},\mathscr{S}_{K}(x_{n+1});\mathscr{P}_{\mathscr{S}_{K}}
	(\hat{h}_{K}(x_{n+1}),\hat{\sigma}_{K}(x_{n+1}))\right).
	\end{equation}
	
	\begin{table}[H]
			\footnotesize
		\centering
	\caption{Schemes for knowledge state updating $f_{n+1}^{(K)}$ based on query batches of $K$ at location $x$.}
		\begin{tabular}{c|c|c}
			Update Scheme & Sufficient Statistic & Parameters  \\
			\hline
			$p$-estimate \eqref{eq:updating_batched} using $\bar{p}_{K}$ or $\hat{p}_{\mathscr{L}}(\bar{p})$ & $B_{K} = \sum_{j=1}^{K}1_{\{Y_{j} = +1\}}$ & $p$   \\
			Majority Boosting \eqref{eq:updating_maj} with $\mathscr{P}_{\mathscr{M}_{K}}(\hat{p})$ & $\mathscr{M}_{K} = 1_{\{B_{K} > \ceil{K/2} \}}$ & $p$  \\
			Functional Aggregation \eqref{eq:updating_clt} with $\mathscr{P}_{\mathscr{S}_{K}}(\hat{h}_{K},\hat{\sigma}_{K})$ & $\mathscr{S}_{K} = 1_{\{ \sum_{j=1}^{K}Z_{j}>0\}}$ & $h/\sigma$   \\
			\hline
		\end{tabular}
		\label{tab:trans_function}
	\end{table}
	
	\textbf{TPO Strategy. } A different aggregation of $Z_j$'s relies on hypothesis testing, specifically \textit{statistical tests of power one} (TPO) \cite{siegmund1985sequential}. The key idea is to use an adaptive number of replicates $K_\alpha(x)$ so as to boost the probability of correct response to level $p_\alpha$, without explicitly estimating $p(x)$ \cite{waeber2013probabilistic}. Let $S_{K}(x): = \sum_{j = 1}^{K} Z_{j}(x) $ and
	
	\begin{equation}
	\label{eq:TPOTermination}
	K_{\alpha}(x) := \min\{k \in \mathbb{N} : |S_{k}(x)| \geq c_{k}(\alpha)\};
	\end{equation}
	where $(c_{k}(\alpha))_{k \in \mathbb{N}}$ is defined in terms of the distribution of $\epsilon(x)$ and the significance parameter $\alpha \in (0,1)$. $K_\alpha$ is the adaptive batch size and the resulting output is the aggregated signal which is viewed as a test statistic for inference about the positivity of the drift of the random walk $S_\cdot(x)$.  The construction of $c_\cdot(\alpha)$ guarantees that
	$\tilde{p}(x) = \mathbb{P}(\tilde{Z}(x) = \sign (x^* - x) ) \geq 1 - \alpha/2$. To obtain the curved boundary $c_\cdot(\alpha)$ requires knowledge of the distribution of $Z(x)$. For example, if $Z(x) \sim \mathsf{N}(h(x),\sigma^{2})$ then $c_{k}(\alpha) = \sigma((n+1)[\log(n+1)-2\log \alpha])^{1/2}$.
	
	Table~\ref{fig:TPO-analysis} shows the average hitting time $\E_p[{K}_{\alpha}(x)]$ as well as its standard deviation $\hat{\sigma}_{K_{\alpha}}$ (in parentheses) for different $p(x), \alpha$ combinations. It illustrates
	that the expected batch size grows exponentially as $p(x) \downarrow  1/2$, which might be counterproductive in cases where the sampling budget is small. Indeed, instead of trying other locations, TPO will stubbornly sample the same $x$ thousands of times.
	
	\begin{table}[ht]
			\footnotesize
		\centering
			\caption{Average hitting time $\E[{K}_{\alpha}(x)]$ and corresponding standard deviation (in parentheses) to learn $p(x)$ using the TPO rule \eqref{eq:TPOTermination} with $\alpha \in \{0.05,0.10,0.20,0.40\}$ for the $h_1$ function in \eqref{eq:g-ex1}. Results are based on 1,000 macro runs.
			\label{fig:TPO-analysis}
		}
		\begin{tabular}{r|r|r|r|r}
			$p(x)$ & $\E[{K}_{0.05}(x)]$ & $\E[{K}_{0.1}(x)]$ & $\E[{K}_{0.2}(x)]$ & $\E[{K}_{0.4}(x)]$ \\
			\hline
			0.52 & 4951 (3209) & 4352 (3151) & 3563 (2983) & 2715 (2843) \\
			0.55 & 692 (483) & 594 (457) & 456 (403) & 362 (388) \\
			0.60 & 159 (113) & 133 (103) & 105 (95) & 79 (81) \\
			0.70 & 34 (24) & 29 (20) & 23 (18) & 18 (17) \\
			\hline
		\end{tabular}
	\end{table}
	\section{Sampling Policies}\label{sec:sampling}
	Sampling is the process of selecting querying locations so that the knowledge about the root  $X^{*}$ can be improved. In the context of the SRFP, the challenge is that sampling close to the root yields uninformative oracle responses. More specifically, since $x \to x^*$ implies $p(x) \downarrow 1/2$, the knowledge obtained from sampling in a vicinity of  $x^{*}$ is minimal and the updated state $f_{n+1}$ will change very little with relative to $f_{n}$. To resolve this challenge we investigate two classes of sampling policies that enforce \textit{exploration}:
	
	\begin{enumerate}[label=(\roman*)]
		\item \textit{Information Directed Sampling (IDS).} Inputs are greedily selected by optimizing the expected information gain about $X^{*}$ and the response $Y(x)$;
		\item \textit{Posterior Quantile Sampling.} Inputs are selected based on  $f_{n}$-quantiles.
	\end{enumerate}
	Both  deterministic and randomized versions of each class are analyzed.
	
	\subsection{Information Directed Sampling}
	\label{sec:InformationGainCriterion}
	
	This sampling strategy is driven by the notion of an \textit{acquisition function} which quantifies expected information gain from a new oracle query. A common information-theoretic approach is to maximize the KL divergence between the current knowledge state $f_{n}$ and its expected update $f_{n+1}$ conditional on sampling at a given $x$. The relative entropy between $f_n$ and $f_{n+1}$ can be interpreted as the mutual information between oracle $Y(\cdot)$ and $X^{*}$.
	This idea is similar to entropy-maximizing EI strategies (see e.g.~\cite{HernandezHoffman14}) and leverages the explicit form of KL-divergence when $p(x)$ is known.
	
	\begin{lemma}
		\label{lemma.condmutual}
		\cite{jedynak2012twenty} The expected information gain with regard to the root location $X^{*}$ from sampling at $x$ given $g_{n}$ (and $p(x)$) is
		\begin{align}
		\label{eq:InformationCriterion}
		\mathcal{I}(x,g_{n};p(x)) &:= -\gamma_{n}(x;p(x))\log \gamma_{n}(x;p(x)) - [1-\gamma_{n}(x;p(x))]\log [1-\gamma_{n}(x;p(x))]\nonumber \\
		&+ p(x)\log p(x) + (1-p(x))\log (1-p(x)),
		\end{align}
		where $\gamma_{n}(x;p(x))$ and $p(x)$ are given by \eqref{eq:gamma.x} and \eqref{eq:ProbCorrect}, respectively.
	\end{lemma}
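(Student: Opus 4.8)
The plan is to recognize that the quantity called ``expected information gain'' in the statement is exactly the mutual information between the random root $X^{*}$ (distributed according to $g_{n}$) and the oracle reply $Y_{n}(x)$, i.e.\ $\mathcal{I}(x,g_{n};p(x)) = \mathbb{E}_{Y_{n}(x)}\bigl[D_{\mathrm{KL}}(g_{n+1}\,\|\,g_{n})\bigr]$, and then to evaluate this expectation by brute force using the one-step update formulas of Lemma~\ref{lemma:pba}.

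First I would condition on the two possible replies $Y_{n}(x)=\pm1$, which occur with probabilities $\gamma_{n}(x;p(x))$ and $1-\gamma_{n}(x;p(x))$ by \eqref{eq:gamma.x}. On the event $\{Y_{n}(x)=+1\}$ the likelihood ratio $g_{n+1}(u)/g_{n}(u)$ read off from \eqref{eq:pba} is the two-valued step function equal to $p(x)/\gamma_{n}$ on $\{u\geq x\}$ and $(1-p(x))/\gamma_{n}$ on $\{u<x\}$; hence $D_{\mathrm{KL}}(g_{n+1}\|g_{n})$ reduces to two terms weighted by the posterior masses $\int_{u\geq x}g_{n+1}=p(x)[1-G_{n}(x)]/\gamma_{n}$ and $\int_{u<x}g_{n+1}=(1-p(x))G_{n}(x)/\gamma_{n}$. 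The event $\{Y_{n}(x)=-1\}$ is handled identically with $p(x)$ and $1-p(x)$ interchanged and $\gamma_{n}$ replaced by $1-\gamma_{n}$.

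Next I would multiply each conditional KL-divergence by its reply probability $\gamma_{n}$ (resp.\ $1-\gamma_{n}$), which cancels the normalizing denominators, and then collect the four resulting families of terms by their logarithmic factor. The coefficients of $\log p(x)$ and of $\log(1-p(x))$ sum to $p(x)$ and $1-p(x)$ respectively, since the $G_{n}(x)$ and $1-G_{n}(x)$ weights combine; while the coefficients of $-\log\gamma_{n}$ and $-\log(1-\gamma_{n})$ collapse to $\gamma_{n}$ and $1-\gamma_{n}$ precisely because of the defining identities $\gamma_{n}=p(x)[1-G_{n}(x)]+(1-p(x))G_{n}(x)$ and $1-\gamma_{n}=(1-p(x))[1-G_{n}(x)]+p(x)G_{n}(x)$. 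This leaves exactly \eqref{eq:InformationCriterion}.

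There is no genuinely hard analytic step: the only thing to watch is the bookkeeping in the term-collection, together with the structural observation that the conditional entropy $H(Y_{n}(x)\mid X^{*})$ is the binary entropy of $p(x)$ regardless of which side of $x$ the root lies on — which is exactly what makes the $\log p(x)$ and $\log(1-p(x))$ coefficients independent of $G_{n}(x)$. Equivalently one could shortcut the whole computation via $\mathcal{I}=H(Y_{n}(x))-H(Y_{n}(x)\mid X^{*})$ and read the two entropies off from $\gamma_{n}$ and $p(x)$ directly; I would still carry out the explicit expected-KL calculation, since that is the form in which the acquisition function is used in the sequel.
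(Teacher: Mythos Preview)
Your proposal is correct and complete. Note, however, that the paper does not supply its own proof of this lemma: it is stated with a citation to \cite{jedynak2012twenty} and then used without further argument. So there is nothing to compare against on the paper's side.

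Your approach --- computing $\mathbb{E}_{Y_{n}(x)}[D_{\mathrm{KL}}(g_{n+1}\|g_{n})]$ by conditioning on $Y_{n}(x)=\pm 1$, reading the likelihood ratios off the update \eqref{eq:pba}, and collecting terms --- is the standard one, and your bookkeeping observations (that the $\gamma_{n}$ denominators cancel against the reply probabilities, and that $H(Y_{n}(x)\mid X^{*})$ depends only on $p(x)$ because the conditional distribution of $Y_{n}(x)$ given $X^{*}$ is Bernoulli with parameter either $p(x)$ or $1-p(x)$, which have the same binary entropy) are exactly the two points that make the calculation close. The shortcut you mention, $\mathcal{I}=H(Y_{n}(x))-H(Y_{n}(x)\mid X^{*})$, is in fact the cleanest way to see the result and is how the cited reference frames it; either route is fine here.
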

	
	A greedy IDS strategy then myopically maximizes the information gain, $\mathcal{I}(x,f_{n}; {p}(x))$. As shown in~\cite{waeber2013bisection}, this myopic sampling rule is in fact \emph{optimal} for the global problem of minimizing the KL distance between $f_N$ and $X^*$. This approach has also been adopted in~\cite{jedynak2012twenty} for similar problems appearing in computer vision or, more recently in~\cite{russo2014learning} for on-line optimization problems.
	
	To implement the IDS approach, two modifications are necessary. First,
	similar to Section~\ref{sec:knowledge_states}, given the majority response $\bar{p}(x)$
	we can obtain \emph{a posteriori} plug-in version of~\eqref{eq:InformationCriterion}:
	
	\begin{equation}
	\label{eq:est-InformationCriterion}
	I_{n,K}(x;\hat{p}(x)) := \mathcal{I}(x,f_{n};\hat{p}(x)).
	\end{equation}
	
	Second, the maximization over $x$ can only be done \emph{ad hoc}, since computing \eqref{eq:est-InformationCriterion} can only be applied after querying the oracle $K$-times at $x$. As a work-around, we carry out the optimization over a discrete candidate set $\mathcal{S}_{M}(f_{n}) := \tilde{x}_{1:M}^{(n)}$: one picks $M\ge 2$, candidate locations $\tilde{x}_{1:M}^{(n)}$ using $f_{n}$, queries the oracle $K$-times at each $\tilde{x}_{i}^{(n)}$ and finally updates $f_{n}$ at the maximizer of this criterion:
	\begin{equation}
	\label{eq:greedy_policy}
	x_{n+1} := \argmax_{\tilde{x}_{i} \in  \mathcal{S}_{M}(f_{n})} I_{n,K}(\tilde{x}_i; \hat{p}(\tilde{x}_{i} ) )
	\end{equation}
	
	To construct candidate sets $\mathcal{S}_{M}(f_{n})$ we rely on the quantiles of $f_n$:
	
	\textbf{Deterministic IDS:} The test locations $\tilde{x}_{1:M}^{(n)}$ are \textit{fixed} posterior quantiles of $f_{n}$, i.e.,
	\begin{equation}
	\label{eq:quantiles_ids}
	\tilde{x}_{i}^{(n)} := F_{n}^{-1}(q_{i}).
	\end{equation}
	
	\textbf{Randomized IDS:} The test locations are randomly chosen posterior quantiles of $f_{n}$:
	\begin{equation}
	\label{eq:random_quantiles_ids}
	\tilde{x}_{i}^{(n)} = F_{n}^{-1}(q_{i,n}), \qquad q_{i,n} \sim \mathsf{Unif}(0,1).
	\end{equation}
	
	Note that at each iteration $n=1,2,\ldots$, $K \times M$ queries are made ($K$ at each $\tilde{x}_i$), of which only $K$ are used for actual updating to $f_{n+1}$. Therefore, after $N$ updates used for $f_N$, total wall-clock time is $T = N\times K\times M$. To minimize this inefficiency in our experiments we use $M=2$, so that \eqref{eq:greedy_policy} is reduced to comparing information gain at two chosen locations $\tilde{x}_{1:2}^{(n)}$.
	
	To illustrate the acquisition function $\mathcal{I}$, Figure~\ref{fig:testFunctions} shows the theoretical information gain using the three test functions from Section~\ref{sec:Synthetic_Example} (columns) and a Uniform prior $f_{0}$. We observe that the form of $x \mapsto \mathcal{I}(x,f_{n},p(x))$ is typically bimodal, driven by the fact that $ \mathcal{I}(x^{*},f_{n},p(x^{*})) = 0$ since $p(x^{*}) =1/2$: the information gain of sampling exact at the root location $x_{1} = x^{*}$ is zero. There are two local maxima located  both sides of $x^{*}$, with the higher being  the one with larger oracle accuracy $p(\cdot)$.
	
	\begin{figure}[htb]
		{
			\centering
			\includegraphics[width=0.80\textwidth]{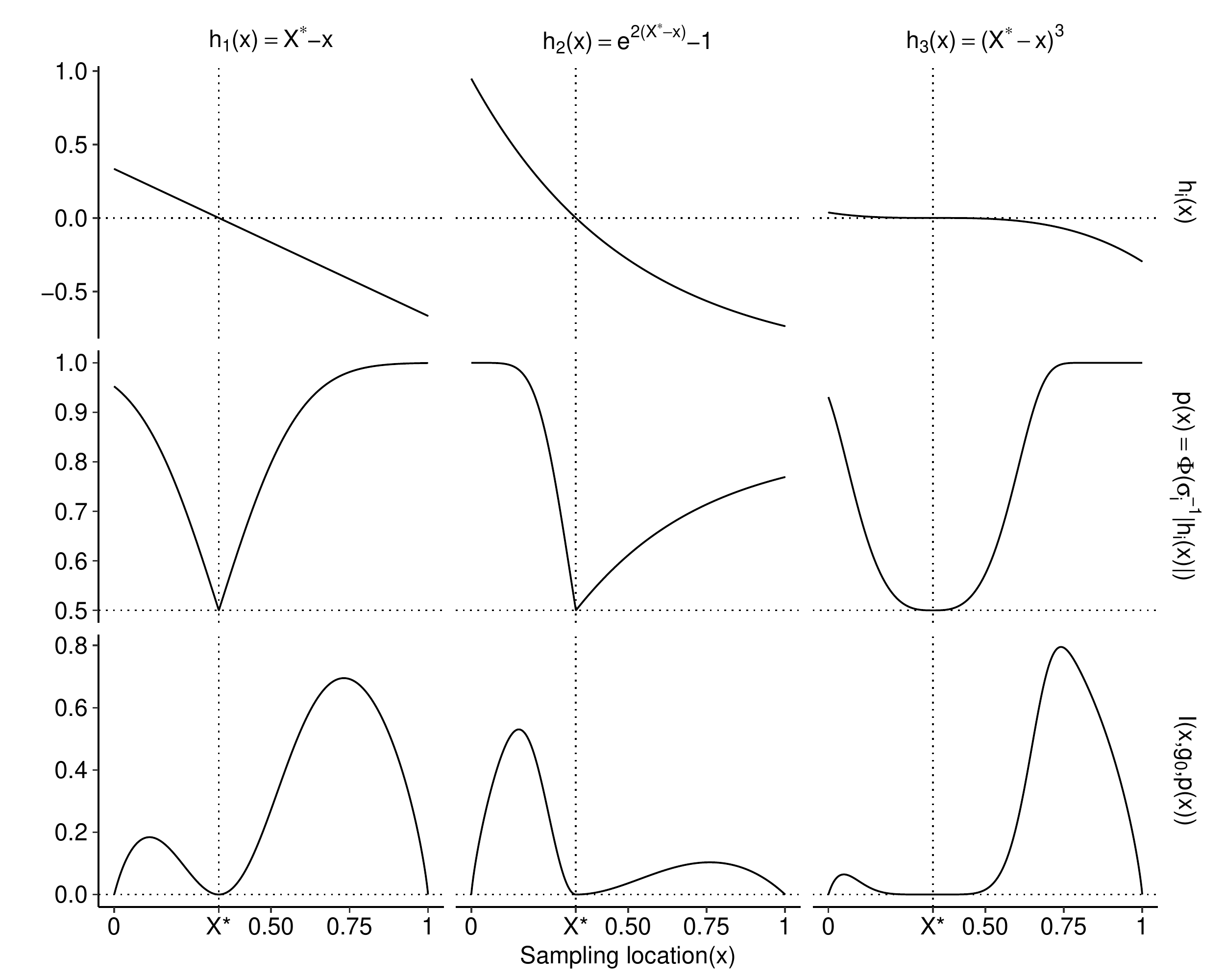}
			\caption{Synthetic test cases for Section \ref{sec:Synthetic_Example}. \emph{Top:}
	Test functions $h_{i}(x)$; \emph{Middle:} probability of correct response $p_{i}(x) = \Phi(\sigma_{i}^{-1}|h_{i}(x)|)$; \emph{Bottom:} information criterion $\mathcal{I}(x,f_{0},p(x))$ with $f_{0}\sim \mathsf{Unif}(0,1)$ prior. The vertical dashed line indicates the root $x^{*} = 1/3$, with $h_{i}(x^{*}) = 0$, $p_i(x^{*}) = 1/2$ and $\mathcal{I}(x^{*},f_{0},p_{i}(x^{*}))=0$.\label{fig:testFunctions}}
		}
	\end{figure}

	\subsection{Posterior Quantile Sampling}
	\label{sec:quantile_sampling}
	
	The message of classical PBA is that one should sample at the \textit{median} of the knowledge state $g_n$. However, this no longer holds when the $p(x)$ depends on the location $x$ since $p(x)\to 1/2$ as  $x \to x^*$. In fact, we show in our numeric examples that the performance of this policy does as bad as sampling uniformly over the input space in terms of uncertainty minimization. Intuitively, sampling at the median  is not suitable since after a few iterations the median is located too close to the root and therefore minimal information gain is obtained (this was already pointed out in \cite{waeber2013probabilistic}).
	
	Thus, other \textit{posterior quantiles} are explored, taking $x_{n+1} = F_n^{-1}( q_n)$. On the one hand, quantile sampling places samples where most of the posterior mass of  $f_{n}$ is located (which after a few iterations will be concentrated around $x^{*}$), allowing to gradually focus on the neighborhood of $x^*$. On the other hand,  quantile sampling is based solely on the knowledge state variable $f_{n}$ and can be used \emph{a priori} without yet having an estimate of $p(x_{n+1})$.

	\textbf{Systematic Quantile Sampling. }
	Locations are selected by \textit{systematically} iterating over $M\ge 2$ posterior quantiles $\check{q}_{0:M-1}$, fixed a priori. Then, in the $n$-th iteration, the next design point is
	\begin{equation}
	\label{eq:syst_sampling}
	x_{n+1} = F_{n}^{-1}(\check{q}_{(n\mod M)}).
	\end{equation}
	We remark that the precise ordering of  $\check{q}_n$'s will affect the results of Syst-Q. To balance the trade-off between exploration and exploitation we look at quantiles that are away from the median $q_n = 0.5$. Considering the shape of $\mathcal{I}$, a sensible rule is to consider the quartiles of $f_n$, i.e.,~$\check{q} \in \{0.25, 0.75\}$.
	
	\textbf{Randomized Quantile Sampling. }
	The next design point is a randomly chosen quantile of the posterior distribution $f_{n}$:
	\begin{equation}
	\label{eq:randomized_policy}
	x_{n+1} = F_{n}^{-1}(q), \ \mbox{where $q\sim \mathsf{Unif}(0,1)$.}
	\end{equation}
	The policy \eqref{eq:randomized_policy} can be interpreted as sampling at a location $X_n \sim f_n$, i.e.,~sampling based on the posterior distribution of $X^*$.
	
	\subsection{Batch size $K$}
	
	An essential tuning parameter in  Algorithm~\ref{alg:algo_generalized_PBA} is the batch size $K \ge 3$ needed to learn $p(x)$ at each updating location $x$. Recall that the total number of learning iterations is
	$N := \left \lfloor{T/K}\right \rfloor$. Therefore,  for a fixed budget $T$,  the batch size $K$ controls the balance between the learning of $X^{*}$ and $p(\cdot)$. When $K$ is small (thus $N$ large), the algorithm is exploring many sampling locations to learn $X^*$. When $K$ is large, the algorithm exploits the oracle in order to estimate $p(x)$ locally with high accuracy. As a result, for large values of $K$ the estimated $\hat{p}(x_{1:n})$ is likely to be close to $p(x_{1:n})$ and therefore  $f_{N}$ resembles the true posterior $g_{N}$. Consequently,  the probabilistic representation about $X^*$ would be excellent (measured, for instance, in terms of the $f_{n}$-coverage).
	However this would come  at the cost of sampling at very few sampling locations $x_{1:N}$, and the resulting limited knowledge about $X^{*}$ would lead to potentially larger residuals $|\hat{x}_{N} - X^{*}|$. In contrast, for $K$ small, the estimated $\hat{p}(x_{1:n})$ is highly biased and $f_{n}$ will significantly differ from the true posterior $g_{n}$ causing $f_{n}$ to collapse to regions where $X^{*}$ may not be located. As we show in our numerical examples, the latter case turns out to be more problematic. In particular we observe that moderately large $K \in [100, 500]$'s are necessary to obtain a reasonable $f_N$; otherwise the bias accumulates quickly.
	
	\section{Numeric Examples}
	\label{sec:Synthetic_Example}
	
	We proceed to present a series of numerical results in order to empirically assess the performance of the G-PBA algorithms introduced above. In analogy to \cite{waeber2013probabilistic}, we utilize the  following three test functions defined for $x \in (0,1)$, cf.~Figure~\ref{fig:testFunctions}:
	
	\begin{enumerate}
		\item The linear function
		\begin{align}
		h_{1}(x) &= x^{*} -x ,  \qquad\qquad \sigma_1(x) = 0.2; \label{eq:g-ex1}
		\shortintertext{\item the exponential function}
		h_{2}(x) &= \exp \{ 2(x^{*} - x)\} - 1, \qquad\quad \sigma_2(x) = 0.2 \cdot 1_{\{x < 1/3\}} + 1 \cdot 1_{\{x > 1/3\}};  \label{eq:g-ex2}
		\shortintertext{\item and the cubic function}
		h_{3}(x) &= (x^{*} - x)^{3}, \qquad\qquad \sigma_3(x) = 0.025. \label{eq:g-ex3}
		\end{align}
	\end{enumerate}
	In all cases we fix $x^{*}=1/3$. Example \eqref{eq:g-ex1} corresponds to a simple function where most of the stochastic root-finding procedures should work well, since its slope is constant and significantly different from zero in locations close to the root $X^{*}$. The curvature of \eqref{eq:g-ex2} creates an asymmetry in sampling: a measurement leftwards of $X^{*}$ yields a correct response with higher probability relative to  a measurement to the right of the root. Consequently, $f_n$ is expected to be skewed. Finally, example \eqref{eq:g-ex3} represents a difficult root-finding setting due to $h_{3}^{\prime}(X^{*}) = 0$, which implies that $p(x) \simeq 1/2$ for $x$ in the vicinity of $X^{*}$.
	We  moreover use a Uniform prior $X^{*} \sim f_{0} \equiv \mathsf{Unif}(0,1)$ and Gaussian noise $\epsilon(x) \sim \mathsf{N}(0,\sigma^{2}(x))$ as specified in \eqref{eq:g-ex1}-\eqref{eq:g-ex3}, so that the function evaluations $Z(x)$ in \eqref{eq:pba_simulator}  are normal random variables with mean $\E[Z(x)]  = h(x)$ and variance $\mbox{Var}(Z(x))=\sigma^{2}(x)$, with corresponding $p_i(x) = \Phi\left( |h_i(x)|/\sigma_i(x)\right)$ (as shown in the second row of Figure~\ref{fig:testFunctions}).
	
	To assess algorithm performance, we mix-and-match the three components that the user must pick: the sampling policy $\eta$, estimation method $\hat{p}$ for $p(\cdot)$, and batch size $K$ as follows:
	\begin{enumerate}

		\item \textit{Sampling policies $\eta$}:

		\begin{itemize}
			\item Deterministic IDS (Det-IDS): select the posterior quantile with highest information value using \eqref{eq:quantiles_ids}. Specifically use \eqref{eq:greedy_policy} with pre-fixed $\tilde{x}_{1:M}$.
			\item Randomized IDS (Rand-IDS): maximize the IDS criterion among $M$ random quantiles of $f_n$ as in~\eqref{eq:random_quantiles_ids}: use \eqref{eq:greedy_policy} with $\tilde{x}_{1:M} \sim \mathsf{Unif}(0,1)$;
			\item Randomized Quantile Sampling (Rand-Q): $x_{n+1} \sim f_n$ as in \eqref{eq:randomized_policy}. 
			\item Systematic Quantile Sampling (Syst-Q): $M$ pre-specified quantiles of $f_n$ that are systematically rotated  using \eqref{eq:syst_sampling}.
		\end{itemize}
		
		\item $p$-\textit{estimators}:  (i) the empirical majority proportion $\bar{p}$ from~\eqref{eq:emp_prop_estimator}; (ii) the posterior mode $\hat{p}_{\mathscr{L}_{0}}$~\eqref{eq:posterior_mode}; (iii)  posterior median $\hat{p}_{\mathscr{L}_{1}}$~\eqref{eq:posterior_median}; (iv) posterior mean $\hat{p}_{\mathscr{L}_{2}}$~\eqref{eq:posterior_mean}; (v) boosted $\mathscr{P}_{\mathscr{M}_{K}}(\bar{p})$~\eqref{eq:px_majority_2} (combined with empirical proportion $\bar{p}$); (vi) aggregated functional responses $\mathscr{P}_{\mathscr{S}_{K}}(\hat{h}_{K},\hat{\sigma}_{K})$~\eqref{eq:Prob_correct_clt}.
		
		\item \textit{Batch size $K$}: several batch values $K$ are used to learn $p(\cdot)$. Namely, we use $K\in \{50,250,500\}$.
	\end{enumerate}
	
	Finally, we also compare to the TPO policy that takes $x_{n+1} = F_{n}^{-1}(1/2)$, and performs a random number of oracle calls $K_\alpha(x_{n+1})$ based on \eqref{eq:TPOTermination}. For the latter we plug-in the true oracle variance $\sigma^2(x)$ and truncate sampling if it does not terminate by final clock-time $T$: $\tilde{K}(x_{n}) := \min\{T-\sum_{j = 1}^{n-1}K_{\alpha}(x_{j}),K_{\alpha}(x_{n})\}$ with the resulting $Z$-based estimator $\mathscr{P}_{\mathscr{S}_{\tilde{K}}}(\hat{h}_{\tilde{K}}(x_{n}),\hat{\sigma}_{\tilde{K}}(x_{n}))$. We try two boosting levels $\alpha \in \{0.05, 0.4\}$.
	
	In summary, the scheme space $(\eta,\hat{p},k)$ consists of 4 sampling policies $\eta$, 6 estimation methods for $\hat{p}$  and 3 batch sizes $K$, plus two versions of the TPO procedure, for a total of $6\times 4 \times 3 +2= 74$ combinations.

	To make all policies comparable we fix the total oracle queries (aka \textit{wall-clock} iterations) $T$ and write $f_T$ for the resulting terminal knowledge state. Recall that $T = N \times M \times K$ where $M=1$ for the Quantile sampling policies and $M=2$ for the IDS-based sampling policies.
	Below we summarize the overall parameters that are used to deploy the above estimation/sampling schemes:
	
	\begin{itemize}
		\item We use the $M=2$ quartiles of $f_{n}$,  $\check{q}_{0:1}=(0.25,0.75)$ for the Det-IDS and Syst-Q policies.
		
		\item $T = 20,000$ wall-clock iterations are used with batch sizes $K \in\{50,250,500\}$, resulting in $N^{IDS}\in\{200,40,20\}$ for the IDS strategies and $N\in\{400,80,40\}$ design points for the Q-based policies, respectively.
		\item All  metrics are estimated using $1,000$ Monte-Carlo (MC) macro-replications.
	\end{itemize}

	To evaluate the quality of the knowledge state variable $f_{T}:=f_{N}^{(K\times M)}$ given a fixed configuration  $(\eta,\hat{p},K)$ we use  the following three performance metrics:
	
	\textbf{Absolute residuals}: to determine the accuracy of the estimator $\hat{x}_n = \text{median}(f_n)$ we consider the $L_1$ \textit{residuals}:
	\begin{equation}
	\label{eq:EvalMeasure-3}
	r_{K}^{\eta}(f_{n}) := |\hat{x}_{n,K}^{\eta} - x^{*}|.
	\end{equation}

	\textbf{Credible intervals}: since $f_n$ is a surrogate for the posterior of $X^*$, we can evaluate the degree of uncertainty associated to the unknown root location $X^{*}$ via its credible interval (CI)
	
	\begin{equation}
	\label{eq:EvalMeasure-1}
	l_{K,1-\alpha}^{\eta}(f_{n}):= F_{n}^{-1}(1-\alpha/2) - F_{n}^{-1}(\alpha/2),
	\end{equation}
	i.e.,~the length of a symmetric $(1-\alpha)$\% CI between the  $\alpha/2$ and $(1-\alpha/2)$ percentiles of $f_{n}$.
	
	\textbf{Coverage}: the coverage probability
	\begin{equation}
	\label{eq:EvalMeasure-2}
	c_{K,1-\alpha}^{\eta}(f_{n}):= \E \left[ 1_{\{x^{*} \in [F_{n}^{-1}(\alpha/2),F_{n}^{-1}(1-\alpha/2)]\}} \right]
	\end{equation}
	measures the quality of $f_n$ relative to the exact Bayesian posterior. If $c_{1-\alpha}(f_N) \ll (1-\alpha)$ the coverage test indicates that $f_N$ prematurely collapses or equivalently overstates its confidence about $X^*$.
	
	Namely, small CI length $l_{K,1-\alpha}$ relative to residuals $r$ will lead to low coverage $c$.
	For both the coverage and the length of the credible  interval we use $\alpha=0.05$.
	
	\subsection{Sensitivity Analysis}
	\label{sub:SensitivityAnalysis}
	
	We use the linear test function~\eqref{eq:g-ex1} as our running example to illustrate the empirical performance of the G-PBA. The results for the exponential and cubic functions~\eqref{eq:g-ex2} and~\eqref{eq:g-ex3} are presented in Appendix~\ref{sub:ExponentialCubicResults}.
	
		Figure~\ref{fig:PerformanceStatistics} compares the performance of different $(\eta, \hat{p}, K)$ schemes as a function of wall-clock budget $T$. In terms of estimating $p(\cdot)$, the best method is unsurprisingly the CLT approximation $\mathscr{P}_{\mathscr{S}_{K}}(\hat{h}_{K},\hat{\sigma}_{K})$ which directly leverages the functional responses $Z(\cdot)$.
	This quantifies the intuition that $Z$'s carry more information than the sign-based oracle responses in~\eqref{eq:pba_response}. As a consequence, using $\mathscr{P}_{\mathscr{S}_{K}}$ leads to lower residuals and better coverage, i.e.,~better recovery of the correct posterior distribution (due to smaller updating errors). Among $B_K$-based methods, two good choices are the posterior mode $\hat{p}_{\mathscr{L}_{0}}$ and the empirical proportion $\bar{p}$. Both of these maintain a good balance between uncertainty reduction and low absolute residuals. Recall that these procedures were shown to be conservative in over-estimating $p(x)$ and hence better in controlling the bias in the updating of $f_n$, cf.~Section~\ref{sub:biasAnalysisPBA}. This is important in the latter stages as $p(x_n) \simeq 1/2$.

	\begin{figure}[htb]
		{
			\centering
			\includegraphics[width=0.80\textwidth]{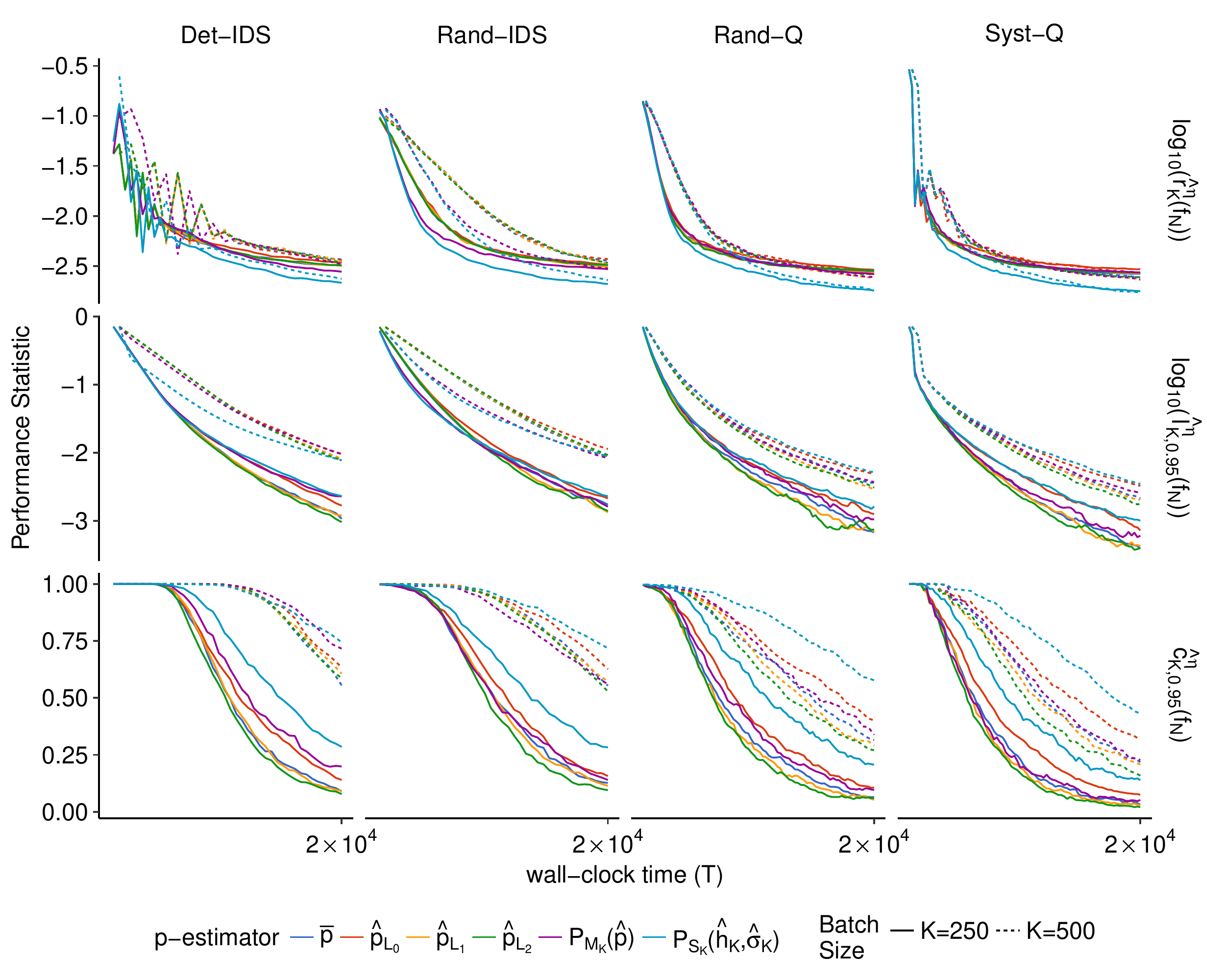}
			\caption{Performance statistics  $\hat{r}_{K}^{\eta}(f_{T}):=m^{-1}\sum_{i = 1}^{m} r_{K}^{\eta}(f_{T}^{(i)})$, $\hat{l}_{K,0.95}^{\eta}(f_{T}):=m^{-1}\sum_{i = 1}^{m}l_{K,0.95}^{\eta}(f_{T}^{(i)})$ (in $\log_{10}$ scale), as well as $\hat{c}^{\eta}_{K,0.95}(f_{T}):=m^{-1}\sum_{i=1}^{m}c_{K,0.95}^{\eta}(f_{T}^{(i)})$ across $i=1,\ldots,m$ Monte-Carlo of each algorithm applied to the linear test function $h_1$ with runs with $m=1,000$. There are four sampling policies $\eta$ (columns), six estimation procedures  $\hat{p}$ (lines), and two batch sizes $K$ (linetype). \label{fig:PerformanceStatistics}}
		}
	\end{figure}

	In terms of the sampling strategies, the Rand-IDS with $K=250$ and Rand-Q  with $K=500$ perform best for minimizing residuals. We observe that all methods struggle with coverage, indicating that $f_T$ prematurely collapses due to ``overconfidence" induced by the bias in $\hat{p}$, cf.~Section~\ref{sub:biasAnalysisPBA}. This effect is mitigated by a larger batch size $K$. Moreover, coverage metrics for IDS methods are higher, primarily driven by the fact that they use fewer macro-iterations (since $N^{IDS} = T/(K \cdot M)$) and hence are less affected by the bias. However, this effect is not practically useful since the IDS methods also have much wider CI's, i.e.,~ they are conservative about $X^*$. We observe that Syst-Q is consistently worse than Rand-Q: they both generate similar absolute residuals, but the CI/coverage of Rand-Q is larger, indicating that it is better in approximating the true posterior $g_n$. Both Randomized  strategies \eqref{eq:random_quantiles_ids} and \eqref{eq:randomized_policy} perform significantly better than the Deterministic counterpart \eqref{eq:quantiles_ids} and \eqref{eq:syst_sampling} for all $\hat{p}$ in terms of minimizing both the absolute residuals and the length of the 95\% CI.
	
	Figure~\ref{fig:PerformanceStatistics} also indicates that the learning rate of the sampling schemes changes over iterations: the randomized methods yield a more rapid reduction in  absolute residuals for $T$ small (i.e.,	~during the first few steps), while the systematic methods enjoy a better asymptotic improvement. This suggests a hybrid heuristic of randomizing the first few macro-iterations (exploring with Rand-Q), and then more aggressively selecting points to maximize entropy reduction (exploiting with Syst-IDS). In all cases, learning is sub-exponential (i.e.,~sub-linear on the log-scale) which occurs due to $p(x) \simeq 0.5$ around the root which necessarily slows down information gains. Indeed, exponential convergence is only feasible when $p(\cdot)$ is bounded away from 1/2. Interestingly, Figure~~\ref{fig:PerformanceStatistics} suggests that the CI of $f_T$ decreases linearly in $T$ which is of course inconsistent with the above slow learning rate of $X^*$ and subsequently ruins coverage, as the mass of the knowledge state $f_T$  no longer includes the true $X^*$.  We observe that the Rand-IDS method is best able to suppress this.
	
	\begin{table}[htb]
		\footnotesize
		\centering
			\caption{MC summary metrics for the test function $h_{1}$ obtained at $T=20,000$. Each of the four  sampling policies $\eta$ is implemented using all six estimation schemes for $p(\cdot)$ with a batch size  $K \in \{250,500\}$. Furthermore, the TPO policy is also included  (last 2 rows); the latter has adaptive $K_\alpha$ with $\alpha \in\{0.05, 0.4\}$.
		}
		\begin{tabular}{c|c|rr|rr|rr}
			\hline
			\multirow{ 2 }{*}{$\eta$} & \multirow{ 2 }{*}{$\hat{p}$} & \multicolumn{2}{c|}{$\hat{r}_{K}^{\eta}(f_{T})$ ($10^{-2}$) }& \multicolumn{2}{c|}{$\hat{l}_{K,0.95}^{\eta}(f_{T})$ ($10^{-2}$)}& \multicolumn{2}{c}{$\hat{c}_{K,0.95}^{\eta}(f_{T})$} (in \%) \\
			& & $K=250$ & $K=500$  & $K=250$ & $K=500$ &  $K=250$ & $K=500$  \\ \hline
			 \multirow{ 6 }{*}{ Det-IDS } & \multirow{ 1 }{*}{ $\bar{p}$ } & 0.3137 & 0.3528 & 0.1346 & 0.8192 & 9.90 & 60.60 \\
			& \multirow{ 1 }{*}{ $\hat{p}_{\mathscr{L}_{0}}$ } & 0.3289 & 0.3623 & 0.1777 & 0.9522 & 15.00 & 65.10 \\
			& \multirow{ 1 }{*}{ $\hat{p}_{\mathscr{L}_{1}}$ } & 0.3132 & 0.3399 & 0.1312 & 0.7723 & 9.80 & 58.20 \\
			& \multirow{ 1 }{*}{ $\hat{p}_{\mathscr{L}_{2}}$ } & 0.3335 & 0.3412 & 0.1200 & 0.6937 & 10.20 & 54.00 \\
			& \multirow{ 1 }{*}{ $\mathscr{P}_{\mathscr{M}}$ } & 0.2634 & 0.3247 & 0.2194 & 0.9516 & 20.30 & 71.60 \\
			& \multirow{ 1 }{*}{ $\mathscr{P}_{\mathscr{S}}$ } & 0.2155 & 0.2365 & 0.2509 & 0.7647 & 29.10 & 71.60 \\
			\cline{1-8}\multirow{ 6 }{*}{ Rand-IDS } & \multirow{ 1 }{*}{ $\bar{p}$ } & 0.3079 & 0.3329 & 0.1655 & 0.9596 & 13.90 & 58.60 \\
			& \multirow{ 1 }{*}{ $\hat{p}_{\mathscr{L}_{0}}$ } & 0.3108 & 0.3585 & 0.2065 & 1.0756 & 16.00 & 63.00 \\
			& \multirow{ 1 }{*}{ $\hat{p}_{\mathscr{L}_{1}}$ } & 0.3196 & 0.3524 & 0.1653 & 0.9793 & 12.40 & 55.90 \\
			& \multirow{ 1 }{*}{ $\hat{p}_{\mathscr{L}_{2}}$ } & 0.3238 & 0.3456 & 0.1500 & 0.8778 & 9.90 & 49.90 \\
			& \multirow{ 1 }{*}{ $\mathscr{P}_{\mathscr{M}}$ } & 0.2867 & 0.3017 & 0.2149 & 0.8888 & 15.70 & 56.00 \\
			& \multirow{ 1 }{*}{ $\mathscr{P}_{\mathscr{S}}$ } & 0.1990 & 0.2248 & 0.2765 & 0.9066 & 29.60 & 74.70 \\
			\cline{1-8}\multirow{ 6 }{*}{ Rand-Q } & \multirow{ 1 }{*}{ $\bar{p}$ } & 0.2653 & 0.2417 & 0.1010 & 0.3105 & 6.70 & 30.10 \\
			& \multirow{ 1 }{*}{ $\hat{p}_{\mathscr{L}_{0}}$ } & 0.2958 & 0.2706 & 0.1420 & 0.4431 & 11.10 & 36.70 \\
			& \multirow{ 1 }{*}{ $\hat{p}_{\mathscr{L}_{1}}$ } & 0.2591 & 0.2680 & 0.0798 & 0.3617 & 6.90 & 30.70 \\
			& \multirow{ 1 }{*}{ $\hat{p}_{\mathscr{L}_{2}}$ } & 0.2688 & 0.2540 & 0.0567 & 0.2844 & 4.80 & 24.90 \\
			& \multirow{ 1 }{*}{ $\mathscr{P}_{\mathscr{M}}$ } & 0.2626 & 0.2447 & 0.1058 & 0.3514 & 8.40 & 34.30 \\
			& \multirow{ 1 }{*}{ $\mathscr{P}_{\mathscr{S}}$ } & 0.1851 & 0.1893 & 0.1623 & 0.4613 & 20.00 & 53.20 \\
			\cline{1-8}\multirow{ 6 }{*}{ Syst-Q } & \multirow{ 1 }{*}{ $\bar{p}$ } & 0.2840 & 0.2368 & 0.0456 & 0.1824 & 3.70 & 19.80 \\
			& \multirow{ 1 }{*}{ $\hat{p}_{\mathscr{L}_{0}}$ } & 0.2943 & 0.2670 & 0.0833 & 0.3268 & 5.80 & 28.60 \\
			& \multirow{ 1 }{*}{ $\hat{p}_{\mathscr{L}_{1}}$ } & 0.2852 & 0.2437 & 0.0388 & 0.2031 & 3.70 & 18.20 \\
			& \multirow{ 1 }{*}{ $\hat{p}_{\mathscr{L}_{2}}$ } & 0.2819 & 0.2380 & 0.0363 & 0.1863 & 2.30 & 17.10 \\
			& \multirow{ 1 }{*}{ $\mathscr{P}_{\mathscr{M}}$ } & 0.2894 & 0.2375 & 0.0526 & 0.2160 & 3.20 & 20.90 \\
			& \multirow{ 1 }{*}{ $\mathscr{P}_{\mathscr{S}}$ } & 0.1775 & 0.1634 & 0.1029 & 0.3439 & 11.10 & 45.60 \\
			\cline{1-8}\multirow{2}{*}{ TPO } & $p_{0.05}$& \multicolumn{2}{c|}{0.8640}& \multicolumn{2}{c|}{4.8232}& \multicolumn{2}{c}{4.0}\\
			& $p_{0.40}$& \multicolumn{2}{c|}{1.2558}& \multicolumn{2}{c|}{39.3445 }& \multicolumn{2}{c}{72.7}\\
			\hline
		\end{tabular}
		\label{tab:sampling-comp-mc}
	\end{table}

	Table~\ref{tab:sampling-comp-mc} lists the final summary statistics at $T=20,000$  for the considered combinations $(\eta,\hat{p},K)$ utilizing the linear test function $h_{1}$. We observe that even for this straightforward setting, a small batch size (cases $K=50$ and $K=100$ which were omitted) is insufficient, leading to a situation whereby absolute residuals are very large whereas the average 95\% CI length is small so that $f_T$ is collapsing prematurely. On the other hand, for $K=500$,  the average absolute residuals, as well as the average 95\% CI length are significantly small across all $\eta$ and $\hat{p}$, indicating that the associated posterior $f_{T}$ is placing most of its mass near  the actual root value $x^{*}$. If $K=250$, then it can be observed that there is a good balance between residuals and length of CI. In particular, \textit{Rand-Q} behaves well in combination with the CLT estimator.
	Finally, the last two rows of Table~\ref{tab:sampling-comp-mc} summarize the performance of TPO-PBA. This policy leads to very large batch sizes, and in this case study used just $N=6$ and $N = 9$ (median) sampling locations  with $\alpha = 0.05$ and $\alpha = 0.40$, respectively. As a result TPO-PBA is not able to learn $X^*$, leading to residuals  as well as the CI length that are significantly larger in comparison to our G-PBA policies.
	
	\subsubsection{Sensitivity Analysis for the Exponential and Cubic Test Functions}
	
	Performance evaluation metrics (Tables only) for the other test cases~\eqref{eq:g-ex2}--\eqref{eq:g-ex3} appear in Appendix A.2. Here we discuss the main take-aways. Among policies, Rand-Q works best for $h_2$ and Syst-Q for $h_3$ although the differences are not significant. As before, the functional response estimator $\mathscr{P}_{\mathscr{S}_{K}}(\hat{h}_{K},\hat{\sigma}_{K})$ performs best for root-finding, yielding lowest estimator error and highest coverage. This confirms the value of using as much information from the oracle as possible. Due to the more difficult setting, a larger batch size $K=500$ (representing a total of $N=40$ sampling locations for the Q-based policies and $N^{IDS} = 20$ for the IDS policies) is needed. Table 5 in the Appendix shows the complete failure of PBA when $K$ is too small ($K=50$ in the Table) whereby $f_T$ collapses, severely underestimating posterior uncertainty and leading to zero coverage.
	
	If only the response sign~\eqref{eq:pba_response} is used to learn $X^{*}$, then there is no clear ``winner'' among the proposed methods. We observe that the IDS policies are less accurate (higher $\hat{r}$) but also higher coverage.  Similarly, the empirical $\bar{p}$ and posterior median $\hat{p}_{\mathscr{L}_{1}}$ are best for maximizing accuracy while the posterior mode $\hat{p}_{\mathscr{L}_{0}}$ is best for maximizing coverage.  This is consistent with previous discussion that the latter minimizes bias in learning $p(x) \simeq 1/2$ and so is a more ``conservative'' approach that slows down error propagation in $f_n$. The majority boosting approach with $\mathscr{P}_{\mathscr{M}}$ also works quite well.
	
	We observe that in these more challenging settings, all methods suffer from model mis-specification which cause $f_N$ to deviate from the true posterior and lead to poor statistical coverage with respect to the true root. This premature posterior collapse ranges from extremely severe ($\hat{r} \gg \hat{l}$ so the residuals are much larger than the estimated uncertainty about $X^*$), to moderate (coverage $\hat{c}_{0.95} \in [0.5,0.8]$). It is present also for the TPO-PBA approach which is supposed to control estimation error for $\hat{p}$ but relies on the assumption of Gaussian noise. Thus, it remains an open problem to find an approach that would guarantee asymptotic consistency, or at least heuristically match the preset coverage levels. For now our results confirm the strong sensitivity of PBA to properly estimating oracle properties and the discrepancy between the generally low residuals  obtained (i.e.~good root estimate) and the mediocre quantification of root uncertainty. 
	
	\subsection{Evaluating the Quality of the Design}\label{sub:qualityDesign}
	
	To focus on the sampling aspect of G-PBA, we examine more closely the design quality of our schemes. To do so, we take $x_{1:n}^{(K,\eta)}$ as the sequence of design points obtained implementing the sampling policy $\eta$ and batch size $K$ for $n=1,\ldots,N-1$ and then compute the true posterior
	$g_{n}^{(K,\eta)}$ based on the obtained sequence $(x,y)_{1:N}$. We then again evaluate our results considering the resulting
	absolute residuals $\hat{r}_{K}^{\eta}(g_{n})$ and length of $(1-\alpha)\%$ CI length, only.
	
	The G-PBA sampling strategies $\eta$ are benchmarked against the following \textit{baseline} schemes which employ the true $p(x)$ (and therefore the actual posterior density $g_{n}$) to select sampling locations:
	
	\begin{enumerate}[label=(\roman*),align=left]
		\item True-IDS: greedily maximizing the  information criterion~\eqref{eq:InformationCriterion} across the full $x_{n+1} \in (0,1)$:
		\begin{equation}
		\label{eq:IDSTrueSampling}
		x_{n+1} = \argmax_{x \in (0,1)} \mathcal{I}(x,g_{n};p(x));
		\end{equation}
		
		\item Median-sampling: proposed in~\cite{waeber2011bayesian} even for the case where $p(\cdot)$ is location-dependent:
		\begin{equation}
		\label{eq:medianSampling}
		x_{n+1} = \text{median}(g_{n});
		\end{equation}
		
		\item Uniform Sampling (Uniform):
		\begin{equation}
		\label{eq:UniformSampling}
		x_{n+1} \sim \mathsf{Unif}(0,1).
		\end{equation}
	\end{enumerate}
	
	Policies \eqref{eq:IDSTrueSampling} and \eqref{eq:medianSampling} sequentially employ $g_{n}$ to select new sampling locations; whereas \eqref{eq:UniformSampling} is a \textit{passive policy} and hence a lower bound for adaptive sampling. Recall that \eqref{eq:IDSTrueSampling} is optimal in the sense of maximizing the expected KL distance between $g_{n}$ and $g_{n+1}$ and thus used as an upper bound on performance. In order to make  these baseline policies comparable to the G-PBA policies \eqref{eq:quantiles_ids}--\eqref{eq:randomized_policy}, $g_{n}$ must also be updated in batch (i.e., using the transition function \eqref{eq:updating_states}).
	
	Figure~\ref{fig:ResidualsActual} shows the average absolute residuals (left panel) and average length of the 95\% $g_{T}$-credible intervals (CI, right panel) for G-PBA  and baseline policies for $K \in \{50,250,500\}$ in wall-clock time $T$. The evaluation of G-PBA is done using three representative estimators for $p(\cdot)$: the empirical majority proportion~\eqref{eq:emp_prop_estimator}, the posterior mode~\eqref{eq:posterior_mode} and the functional aggregation~\eqref{eq:Prob_correct_clt}. As expected the baseline strategies illustrate the extremes. On the one hand, the IDS policy using the true $p(x)$ wins overall, yielding very low residuals and tight CI's. On the other hand, median-sampling produces even smaller residuals but completely fails to reduce uncertainty, returning very large CI (sometimes even worse than from Uniform sampling). Thus, median-sampling is too uninformative and prevents the posterior to shrink to a Dirac mass. Also note that if $p(x)$ is known then batching slows down the learning rate of $X^{*}$. This is best seen in the first few steps: to attain an average absolute residual of $1\times 10^{-2}$ (i.e., -2 in $\log_{10}$ scale) it requires $T =550,2250,4500$ for $K= 50,250,500$, respectively. On the other hand, by $T=20K$ wall-clock iterations the impact of batching is insignificant.
	
	Among G-PBA approaches we see that for $K=50$ they do not perform well, unable to beat even Uniform sampling in terms of their proposed design. This re-iterates the need to accurately learn $p(x)$ to select sensible sampling locations. Det-IDS produces the best design for large $K$ and is best able to approximate the true IDS sampling. Another insight is that the randomized policies are very  successful for low $T$, reducing residuals even faster than IDS with true $g_T$. This implies that a viable heuristic would be to randomize the first few macro-iterations (explore), and then act more aggressively so as to select points at which the entropy reduction is maximal (exploit). Finally, we again confirm the superiority of using functional responses, which in particular best reduce CI length of $X^*$ (right panel of Figure~\ref{fig:ResidualsActual}).

	\begin{figure}[htb]
		{
			\centering
			\includegraphics[width=.90\textwidth]{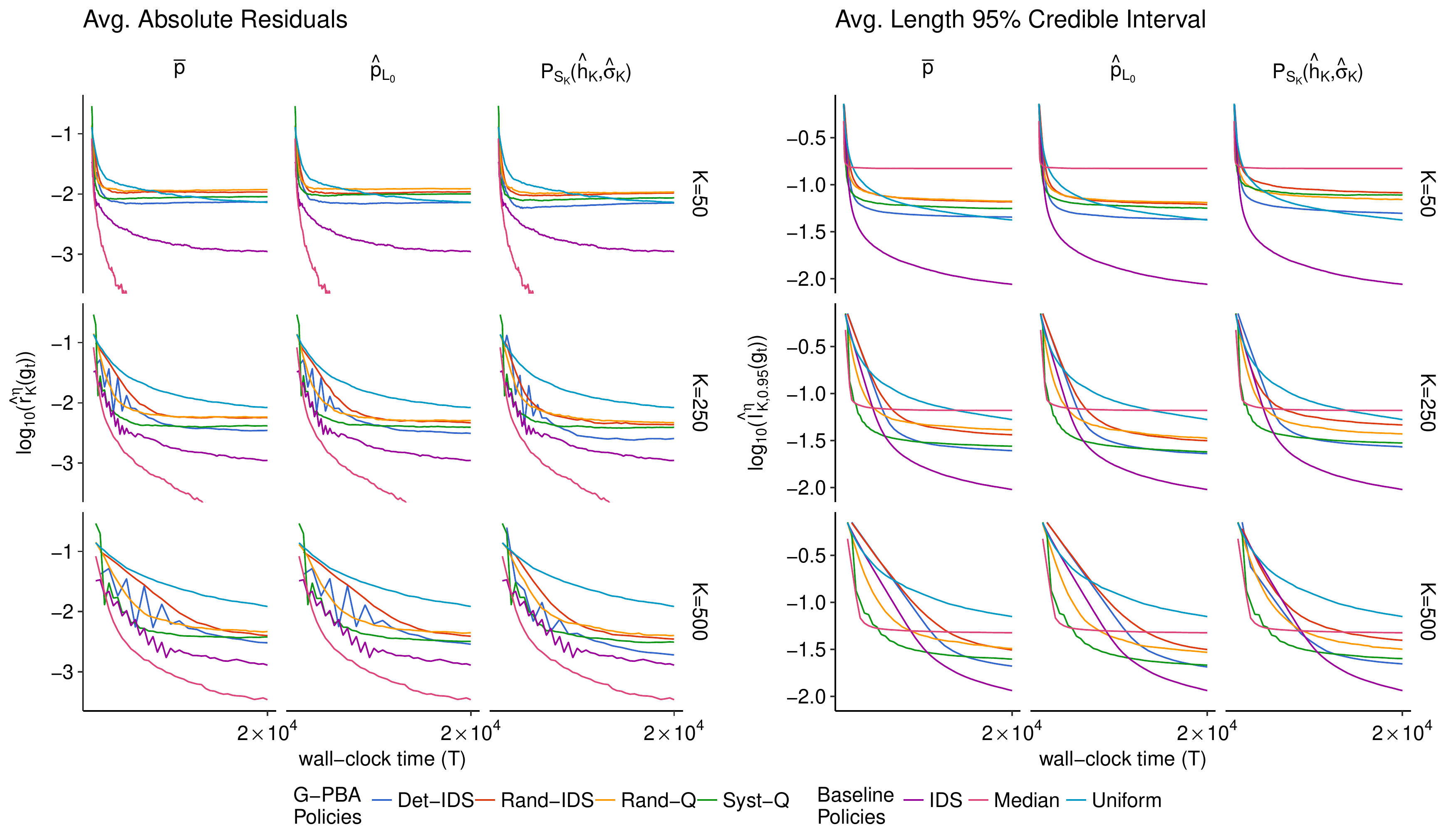}
			\caption{Average absolute residuals ($y$-axis on left panel) and average length of 95\% CI ($y$-axis on right panel) evaluated in wall-clock time ($x$-axis) obtained utilizing the sampling points generated by the G-PBA policies and evaluate them into the updating model that uses the true $p(\cdot)$; as well as the baseline sampling policies for batch sizes $K\in \{50,250,500\}$ (rows) and fixed estimation methods for $p(\cdot)$ (columns). The baseline methods are independent of $\hat{p}$ and hence same across columns.
				\label{fig:ResidualsActual}}
		}
	\end{figure}
	
	\section{Case Study: Root-Finding for Optimal Stopping}
	\label{sec:osp}
	
	Let us briefly recall a generic discrete-time optimal stopping problem on a finite horizon. Let $X \equiv X_{1:\bar{T}}$ be a  real-valued Markov process generating an information filtration $\mathcal{G} = \sigma(X_{1:t})$. Set $\mathcal{S}$ to be the collection of all $\mathcal{G}$-stopping times smaller than some given horizon $\bar{T}<\infty$, and $H(t,x)$ the (bounded) reward function for stopping at time $t=0,1,\ldots,\bar{T}$. The {Optimal Stopping Problem} (OSP) consists of maximizing the expected reward $H(\tau, X_{\tau})$ over $\tau \in \mathcal{S}$. Towards solving the OSP,  define the \textit{value function}
	$V(t,x) := \sup_{\tau\geq t, \tau \in \mathcal{S}} \mathbb{E}\left[H(\tau, X_{\tau})|X_{t} = x \right]$ for any $0\leq t\leq \bar{T}$. Standard dynamic programming arguments imply that $V(t,x) = H(t,x) + \max \{h(x; t),0\}$ where the function
	\begin{equation}
	\label{eq:time_value}
	h(x; t):= \mathbb{E} \bigl[V(t+1,X_{t+1}) \, |X_{t}=x \bigr] - H(t,x),
	\end{equation}
	is the \textit{timing value}. It follows that the stopping decision at a given $(t,x)$ is equivalent to comparing $V(t,x)$ and $H(t,x)$: $\mathfrak{S}_{t} := \{x : \tau^{*}(t,x) = t\} =   \{x: V(t,x) = H(t,x) \} = \{h(x; t)\leq 0 \}$. Thus, it is optimal to stop immediately if and only if the conditional expectation of tomorrow's reward-to-go is less than the immediate reward. Frequently, a priori structure implies that the stopping set $\mathfrak{S}_t$ above is a half-line, i.e.,~$h(\cdot; t)$ has a unique root $x^{*}$. Consequently, solving the OSP at stage $t$ is equivalent to a root-finding problem for $h(\cdot; t)$.
	
	A stochastic simulation approach (known in the literature as the Longstaff-Schwartz paradigm) recursively builds noisy simulators for $h(t,x)$ over $t=\bar{T}-1,\bar{T}-2,\ldots$. This is obtained by generating forward paths $x_{t:\bar{T}}$ of the state process and computing corresponding path-wise stopping times $\tau \equiv \tau(t+1,x_{t:\bar{T}})$ (which rely on $\mathfrak{S}_{t+1:\bar{T}}$ and hence are recursively known). The realization $z_{t}(x_t) := H({\tau},x_{\tau}) - H(t,x_{t})$ is the pathwise timing value, i.e.,~difference between future and immediate reward over the given trajectory. By construction, $\mathbb{E}[Z_{t}(x_{t})] = h(x_t;t)$ which matches the structure of the oracle \eqref{eq:pba_simulator}. The random component $\epsilon(x; t)$ arises intrinsically from the randomness in the trajectory $x_{t:\bar{T}}$. Therefore, the PBA approach offers a novel algorithm to solve one-dimensional optimal stopping problems. Notably, it essentially bypasses standard value-function approximation methods, and allows to directly quantify accuracy of estimated policy $\widehat{\mathfrak{S}_t} = [0,\hat{x}]$.
	
	As an illustration, we revisit the popular example of a Bermudan Put option within a discretized Black-Scholes model: the reward function is $H(t,x) := e^{-rt}(K^{Put} - x)_{+}$, $(X_t)$ is a log-normal random walk and $r$ is the interest rate. It is well-known that there is a unique \textit{exercise boundary} $x^*(t) \le K^{Put}$, and one should exercise as soon as $X$ drops below this boundary: $\mathfrak{S}_t = [0,x^*(t)]$.
	
	For the numeric example below we take the parameters $K^{Put} = 40, r=0.06, \bar{T}=1$ and restrict to the domain  $(25,40)$ (which is based on some mild domain knowledge, as very low stock prices are known to definitely trigger exercise). Thus, we consider the following oracle (with $t$ fixed):
	
	\begin{equation}
	\label{eq:ospOracle}
	Z^{Put}(x) := h(x;t) + \epsilon(x; t), \qquad x \in [25,40];
	\end{equation}
	where the  latent function $h(\cdot; t)$ is the timing value and $x$ is the stock price at date $t$. The left panel of Figure~\ref{fig:Regression_spline} shows an estimate $\hat{h}(\cdot; t)$ of ~\eqref{eq:time_value}, as well as the distribution of $\epsilon(\cdot; t)$. The plot was obtained by fitting an off-line smoothing spline model to $500$ pointwise estimates $\hat{h}(x_i; t)$ (equidistant in $(25,40)$), each obtained from $K=20,000$ oracle calls, i.e.,~a total of $T=10^7$ function evaluations. A deterministic root finding procedure (Newton-Raphson) was run to estimate $x^*\simeq 35.1249$ (vertical dashed line) based on the latter $\hat{h}$. This estimate of the root is used as the ground truth in the sequel, although notably it comes without any standard error, being based on a point estimate of $h(\cdot; t)$.
	
	In this case study, \eqref{eq:ospOracle} violates the basic PBA assumption of a symmetric noise distribution. Instead Figure~\ref{fig:Regression_spline} demonstrates that $\epsilon(x)$ is right-skewed and heavy-tailed. In particular, $p(x^*) < 0.5$, see right panel of the Figure. Because PBA in fact searches for the point $x^*_{med}$ such that $p(x^*_{med}) = 0.5$, direct use of \eqref{eq:ospOracle} will return the root $x_{med}^*$ of the \emph{median} $\hat{q}_{Z}^{0.50}(x):=\hat{F}_{Z}^{-1}(0.50;x)$ (black line in the Figure) rather than the root $x^*$. To resolve this, we use a \emph{pre-averaging} procedure that considers the sign of an average of $a>1$ oracle
	evaluations:
	\begin{equation}
	\label{eq:PreAveragedOSOracle}
	\bar{Y}^{Put}_{a}(x):= \sign \{\bar{Z}_{a}(x)\}, \quad \bar{Z}_{a}(x) := \textstyle a^{-1}\sum_{l=1}^{a}Z_{l}(x).
	\end{equation}
	The G-PBA now works with \eqref{eq:PreAveragedOSOracle} to estimate
	the corresponding probability of correct response
	\begin{equation}
	\label{eq:ProCorrectPreAverageOracle}
	p_{a}^{Put}(x):= \mathbb{P}(\bar{Y}^{Put}_{a}(x) = \sign\{x^{*}-x\}),
	\end{equation}
	as before, namely by using a batch of $K^{\prime}$ and then considering the signal $B^{Put}_{a,K^{\prime}}(x) := \sum_{j = 1}^{K^{\prime}}1_{\{\bar{Y}^{Put}_{a,j}(x) = +1\}}$. Denoting by $K$ the total number of oracle queries at $x$ we have $K^{\prime} = K/a$ for the number of replicates to query $\bar{Y}^{Put}_{a}(x)$. Note that pre-averaging is not needed for functional response aggregation.
	
	The principal role of $a$ is to alleviate the skewness of $\epsilon(x)$. The right panel of Figure~\ref{fig:Regression_spline} shows $x \mapsto p_{a}^{Put}(x)$ for a range of $a$'s. We see that for $a \le 10$ we still have the difficulty that $x_{med}^*$ is far from $x^*$. In contrast, for $a \in \{25,50\}$ the Figure shows that the PBA assumptions are satisfied: $p_{a}^{Put}(x)>1/2$ for all $x\neq x^{*}$ and $p_{a}^{Put}(x^{*})\simeq 1/2$. Pre-averaging also has the side effect of boosting the signal-to-noise ratio and hence boosting $p_a^{Put}$ similar to the majority-vote estimator in \eqref{eq:px_majority_2}. Overall, the choice of $a$ is governed by the above aim of making $\epsilon$ symmetric, as well as the trade-off between sampling many locations to find $x^*$ vis-a-vis proper probabilistic updating of $f_n$. Our analysis suggests to take $a$ as small as feasible and keep $K^{\prime}$ relatively large. Lastly, since our earlier analysis assumed a decreasing response, the sign of $Z^{Put}(x)$ is flipped in the sequel.
	
	\begin{figure}[htb]
		{
			\centering
			\includegraphics[width=0.85\textwidth]{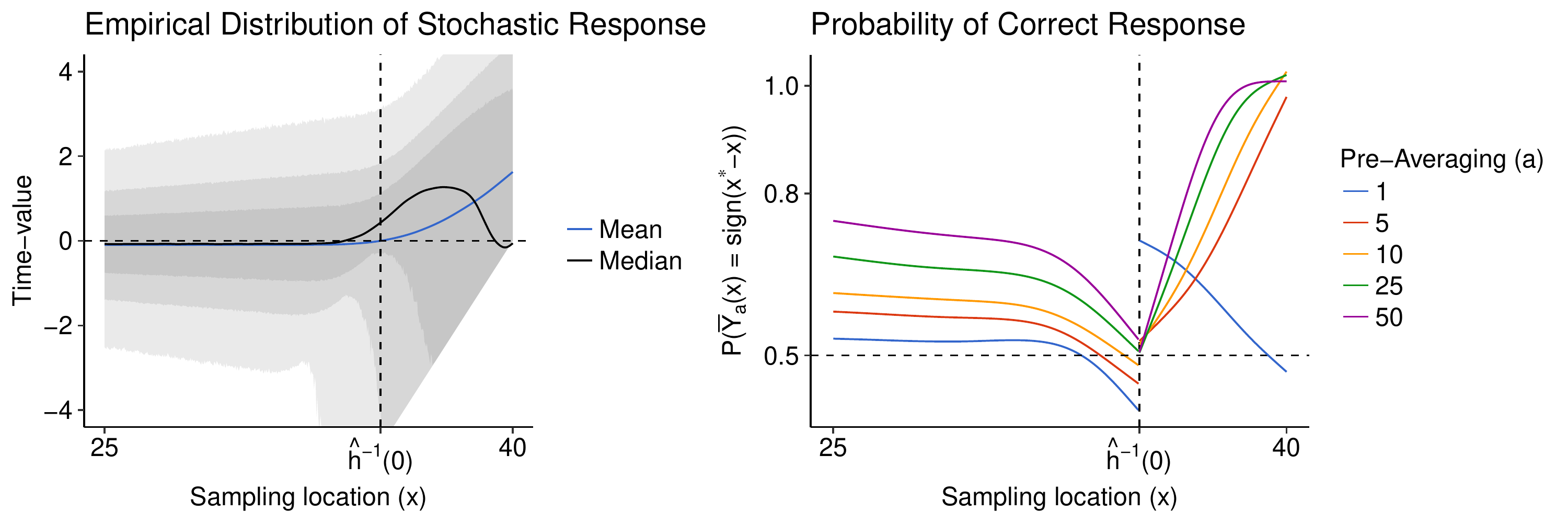}
			\caption{\emph{Left} panel: Bermudan Put oracle distribution: the  fitted mean response $\hat{h}$ (blue line), median $\hat{q}_{Z}^{0.50}(x)$ (black line) and empirical quantiles (at 1\%, 10\%, 25\%, 75\%, 90\% and 99\% levels, different shaded areas). The root estimate $\hat{h}^{-1}(0; t) \simeq 35.125$ (dashed vertical line) is obtained using Newton-Raphson on the off-line surrogate~$\hat{h}$. \emph{Right}: estimated probability of correct response for the pre-averaged oracle $\bar{Y}_{a}^{Put}(\cdot)$ for $a \in \{1,5,10,25,50\}$. Note that $p_{a}^{Put}(x) < 0.5$ is possible due to skewed noise. \label{fig:Regression_spline}}
		}
	\end{figure}

	\subsection{G-PBA for Optimal Stopping}
	\label{sub:GPAforOSP}
	
	We implement the G-PBA sampling policies $\eta$  combined with three best estimation procedures based on Section~\ref{sub:SensitivityAnalysis}~and~\ref{sub:qualityDesign}:  the empirical proportion, $\bar{p}$ and the posterior mode, $\hat{p}_{\mathscr{L}_{0}}$ using the pre-averaged simulator~\eqref{eq:PreAveragedOSOracle} with $a=25$ (smallest pre-averaging value which controls for skewness), $K \in \{1000,2000\}$; as well as the $Z$-based $\mathscr{P}_{\mathscr{S}}(\hat{h},\hat{\sigma})$ with  $a=1$ and $K \in \{1000, 2000\}$.
	Table~\ref{tab:mc-osp} shows the  performance statistics~\eqref{eq:EvalMeasure-3}, \eqref{eq:EvalMeasure-1} and \eqref{eq:EvalMeasure-2} after $T=20,000$  wall-clock iterations based on 1,000 MC macro-replications.
	
	\begin{table}[ht]
		\footnotesize
	\caption{Average performance statistics at $T=20,000$ using 1,000 MC macro-replications. Three estimators for $p_{a}^{Put}(\cdot)$ are implemented using all G-PBA sampling policies.\label{tab:mc-osp}}
	\centering
	\begin{tabular}{c|c|rr|rr|rr}
		\hline
		\multirow{ 2 }{*}{$\eta$} & \multirow{ 2 }{*}{$\hat{p}$} & \multicolumn{2}{c|}{$\hat{r}_{k}^{\eta}(f_{T})$ }& \multicolumn{2}{c|}{$\hat{l}_{K,0.95}^{\eta}(f_{T})$ }& \multicolumn{2}{c}{$\hat{c}_{k,0.95}^{\eta}(f_{T})$ (in \%)}\\
		& & $K=1000$ & $K=2000$  & $K=1000$ & $K=2000$  & $K=1000$ & $K=2000$  \\
			\cline{1-8}\multirow{ 3 }{*}{ Det-IDS } & \multirow{ 1 }{*}{ $\bar{p}$ } & 0.423 & 0.711 & 1.204 & 3.164 & 67.30 & 95.70 \\
			 & \multirow{ 1 }{*}{ $\hat{p}_{\mathscr{L}_{0}}$ } & 0.415 & 0.735 & 1.379 & 3.271 & 71.40 & 95.80 \\
			 & \multirow{ 1 }{*}{ $\mathscr{P}_{\mathscr{S}}$ } & 0.250 & 0.621 & 1.358 & 3.194 & 92.90 & 99.60 \\
			\cline{1-8}\multirow{ 3 }{*}{ Rand-IDS } & \multirow{ 1 }{*}{ $\bar{p}$ } & 0.432 & 0.675 & 1.439 & 3.547 & 69.10 & 94.70 \\
			  & \multirow{ 1 }{*}{ $\hat{p}_{\mathscr{L}_{0}}$ } & 0.422 & 0.675 & 1.622 & 3.680 & 74.70 & 95.10 \\
			  & \multirow{ 1 }{*}{ $\mathscr{P}_{\mathscr{S}}$ } & 0.246 & 0.420 & 1.515 & 3.012 & 91.00 & 98.20 \\
			\cline{1-8}\multirow{ 3 }{*}{ Rand-Q } & \multirow{ 1 }{*}{ $\bar{p}$ } & 0.349 & 0.406 & 0.899 & 1.923 & 56.40 & 81.70 \\
			  & \multirow{ 1 }{*}{ $\hat{p}_{\mathscr{L}_{0}}$ } & 0.375 & 0.430 & 1.119 & 2.149 & 62.30 & 85.70 \\
			  & \multirow{ 1 }{*}{ $\mathscr{P}_{\mathscr{S}}$ } & 0.184 & 0.271 & 0.943 & 2.088 & 81.70 & 97.60 \\
			\cline{1-8}\multirow{ 3 }{*}{ Syst-Q } & \multirow{ 1 }{*}{ $\bar{p}$ } & 0.321 & 0.438 & 0.695 & 1.688 & 46.90 & 78.50 \\
			  & \multirow{ 1 }{*}{ $\hat{p}_{\mathscr{L}_{0}}$ } & 0.399 & 0.504 & 0.991 & 1.917 & 56.00 & 83.70 \\
			  & \multirow{ 1 }{*}{ $\mathscr{P}_{\mathscr{S}}$ } & 0.172 & 0.249 & 0.834 & 1.889 & 80.70 & 96.40 \\
			\hline
		\end{tabular}
	\end{table}
	
	Due to the non-standard noise component and very low signal-to-noise ratio, this is a difficult root-finding problem, comparable to test case $h_3$; in particular the simulation budget $T=20,000$ is quite low. Table~\ref{tab:mc-osp} conveys two main findings. First, we see that the Q-policies perform notably better than the IDS-based policies, although their coverage is a bit worse. This can be interpreted as a preference for exploration, i.e.,~better performance when the number of sampling locations $N$ is larger. It is also observed in the better results for $K=1,000$ vis-a-vis $K=2000$ (however we found that $K=500$ performs noticeably worse).
	Second, we continue to observe dramatic improvement from using functional responses ($\mathscr{P}_{\mathscr{S}_{K}}$) relative to the signed responses ($\hat{p}$-estimators). In this example this effect is amplified since to estimate $\hat{p}$ we must first pre-average the $Y$-values which lowers the quality of estimation since $B^{Put}(x_{n})$ can only use $K/a$ replicates at $x_{n}$; in turn functional aggregation can directly utilize all of the $K$ queries.
	
	We do not observe significant differences among various procedures to obtain $\hat{p}$ or between the Randomized and Systematic sampling. The best overall method is  Rand-Q policy with $\mathscr{P}_{\mathscr{S}_{K}}$ estimator and $K=1,000$ (so $N=20$ updating rounds for $f_N$) which gives average absolute error of 0.17 with a credible interval of 0.83.
	
	\subsubsection{Sampling locations}
	
	Figure~\ref{fig:DistSamplingPointsAmOption} shows the empirical distribution of the sampling locations $x_{1:n}^{(K,\eta)}$ across sampling policies $\eta$ (rows) combined with  $\bar{p}$ using  $K=1,000$.
	As expected, at later stages, all policies sample close to the root $x^*$ (dotted horizontal line). At the same time, there is a significant difference between systematic and randomized schemes early on when $n$ is small. The two deterministic policies \eqref{eq:quantiles_ids} and \eqref{eq:syst_sampling} sample at locations which are relatively far from the root $x^{*}$ at early stages, but then begin to sample very close. This is confirmed by the heavy tails at both extremes of the distribution of $x_{n}^{(K,\eta)}$. Note  the slower convergence of the average estimated root $\hat{x}_{n}^{(K,\eta)}$ (solid line)  to the actual root $x^{*}$ (dotted line). In contrast, the randomized policies  \eqref{eq:random_quantiles_ids} and \eqref{eq:randomized_policy} sample close to $x^*$ already for small $n$, but have larger spread (i.e.,~more exploration) throughout the iterations. The latter is conducive to quickly reducing absolute residuals after a few samples.
	We also observe that the quantile-based policies have lower variability in $f_N$ partly due to utilizing more locations to learn $X^*$ (while the IDS strategies discard half their data). Within the former, we see that the \textit{Rand-Q} policy \eqref{eq:randomized_policy} appears to perform best. Relative to other strategies,  Syst-Q displays a nearly deterministic pattern of sampling locations, i.e.,~it is minimally adaptive to the observations $Y_k$ and produces a very distinct oscillating ``bracketing'' sequence of $x_{1:n}$ across different runs. This systematic approach lowers variance of $\hat{x}_N$ but also tends to affect accuracy.
	
	\begin{figure}[htb]
		{
			\centering
			\includegraphics[width=0.80\textwidth]{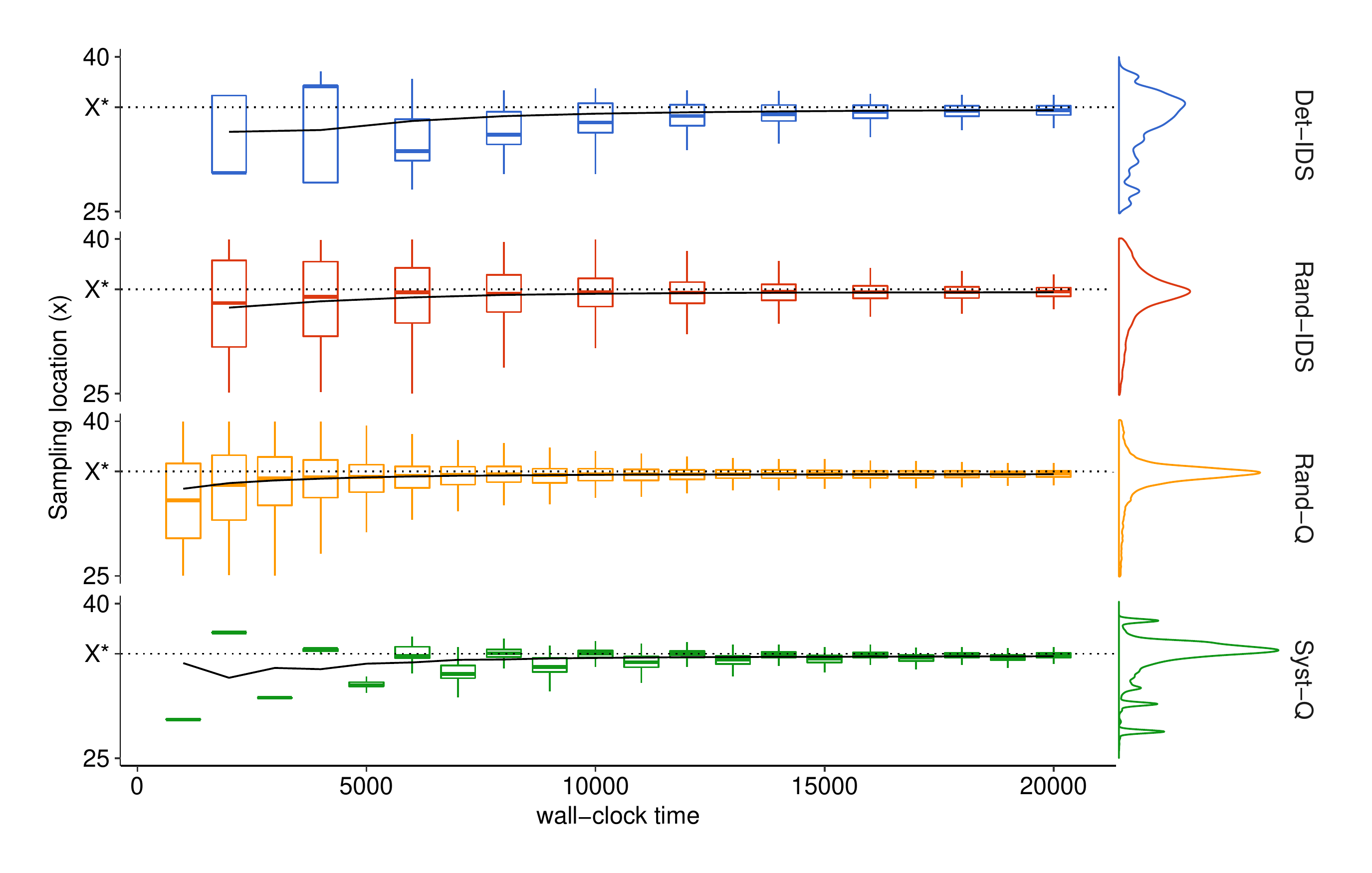}
			\caption{Box-plot of sampling locations $x_{n}^{(K,\eta)}$ ($y$-axis)  across sampling policies $\eta$ (rows) evaluated in wall-clock time($x$-axis) for  $K=1,000$ and $\bar{p}$, obtained after 1,000 MC macro-iterations. The solid line represents the average root estimate $\hat{x}_{n}^{(K,\eta)}$ and the dashed line the root $x^{*}\simeq 35.1249$. \label{fig:DistSamplingPointsAmOption}}
		}
	\end{figure}

	\section{Conclusion}
	\label{sec:conclusion}
	We have developed a family of numerical schemes that generalize probabilistic bisection to the setting where oracle accuracy is unknown and location-dependent. Two take-aways are the advantage of quantile sampling against Information Gain approaches, and the emphasis on reliably estimating $p(\cdot)$. For the former, we note that selecting locations according to $f_n$ minimizes the need to know $p(\cdot)$ during the design construction and hence makes more efficient use of oracle queries.  We also observe that randomized sampling (mimicking the successes of Thompson sampling in other learning contexts)  is an efficient heuristic for balancing the exploration/exploitation trade-off, and improves the learning rate in the early stages by better exploring the posterior of $X^*$.
	
	As discussed, a good estimate of $p(x)$ is invaluable in every step of G-PBA, from updating the knowledge state to evaluating information gain. One implication is the critical role of batched sampling, which in particular calls for surprisingly large batch sizes $K$. We  also document in our experiments the significant advantage in using functional responses (via functional aggregation) relative to utilizing just the signs of the responses.  Otherwise, empirical majority proportion or a conservative Bayes-like $\hat{p}_{\mathscr{L}_0}$ are good choices.

	Note that all the proposed estimators were  constructed locally at $x_n$ and did not use information from previous locations $x_{1:n-1}$. As such, they are robust to arbitrary specification of $p(x)$ and can be viewed as making minimal assumptions about the oracle. A structured extension would be to construct a \textit{spatial} model for $p(\cdot)$ incorporating the knowledge acquired at previous sampling locations $x_{1:n}$. For example, this could be achieved by regressing the observed empirical proportions $\bar{p}(x_{1:n})$ on $x_{1:n}$. Such a blend of a regression-type paradigm (common e.g.~for maximizing $h(\cdot)$) with PBA will be explored in a companion forthcoming paper.

\section*{Acknowledgments}
\label{sec:Acknowledgments}

Rodriguez is partially supported by the National Science and Technology Council of Mexico (CONACYT) and University of California Institute for Mexico and the United States (UCMEXUS) under grant CONACYT-216011. Ludkovski is partially supported by NSF DMS-1521743. We thank Peter Frazier for several helpful discussions and introducing us to PBA.

\appendix

\section{Appendix}

\subsection{Additional Results and Proofs}
\label{sub:ProofsAndResults}

\begin{proof}[Theorem~\ref{thm:batched_sampling}]
We will show that the updating equations~\eqref{eq:batched_updating_pba} hold for any $K \in \mathbb{N}$ via mathematical induction. To do so, let $x$ be a fixed sampling location in the support of the knowledge state variable $g_{n}$ and define $B_{K} := \sum_{j=1}^{K} 1_{\{Y_{j} = +1\}}$ to be the total number of observed positive signs after querying the oracle $K$ times at $x$, where we have dropped the dependency on $x$ in $Y_{j}(\cdot)$. Re-expressing the knowledge transition function~\eqref{eq:batched_updating_pba} using indicator functions as (disregarding the normalizing constant $c_{n}(x)$ in~\eqref{eq:constant_batched}):
\[
\textstyle g_{n+K}(u)  \propto \left[ \sum_{j=0}^{K} p^{j} (1-p)^{K-j}1_{\{B_{K}=j\}} \right]g_{n}(u)1_{\{u \geq x\}}  +\left[ \sum_{j=0}^{K}  (1-p)^{j} p^{K-j} 1_{\{B_{K}=j\}} \right]g_{n}(u) 1_{\{u < x\}},
\]
with $p\equiv p(x)$, we will prove that~\eqref{eq:batched_updating_pba} holds for any $K$ and fixed $n$. For $K=1$  we have  $\{B_{1}=1\} = \{Y_{1} = +1\}$ and $\{B_{1}=0\}~= \{Y_{1} = -1\}$ and \eqref{eq:batched_updating_pba} corresponds to the updating in \eqref{eq:pba}. We now concentrate on the case $u> x$ and inductively suppose Equation~\eqref{eq:batched_updating_pba} holds for $K$; we now establish it for $K+1$:
\begin{align*}
\textstyle  g_{n+(K+1)}(u)  &\propto  \left[ (1-p)1_{\{Y_{K+1}=-1\}}  + p \cdot 1_{\{Y_{K+1}=+1\}} \right]g_{n+K}(u) \\
			&=  \left[ (1-p)1_{\{Y_{K+1}=-1\}}  + p \cdot 1_{\{Y_{K+1}=+1\}} \right]  
  \times \textstyle \left[ \sum_{j=0}^{K} p^{j} (1-p)^{K-j}1_{\{B_{K}=j\}} \right]g_{n}(u) \\ 
& =: (A_1 + A_2) g_n(u) 
\end{align*}

We now have
\begin{equation*}
A_1 = \textstyle \sum_{j^{\prime}=0}^{K} p^{j^{\prime}+1} (1-p)^{K-j^{\prime}}1_{\{B_{K}=j^{\prime},Y_{K+1}=+1\}} = \textstyle \sum_{j=0}^{K+1} p^{j} (1-p)^{K+1-j}1_{\{B_{K}=j-1,Y_{K+1}=+1\}}
\end{equation*}
Similarly we obtain $A_2 =\textstyle  \sum_{j=0}^{K} p^{j} (1-p)^{(K+1)-j}1_{\{B_{K}=j,Y_{K+1}=-1\}},
$ and
\begin{align*}
 A_1+ A_2 &= \textstyle \sum_{j=0}^{K+1} p^{j} (1-p)^{(K+1)-j} \textstyle \Bigl[1_{\{B_{K}=j,Y_{K+1}=-1\}} + 1_{\{B_{K}=j-1,Y_{K+1}=+1\}}\Bigr] \\
&= \textstyle  \sum_{j=0}^{K+1} p^{i} (1-p)^{(K+1)-j}1\{B_{K+1}=j\}.
\end{align*}
Analogous argument works for $u < x$.
\end{proof}

\begin{lemma}[Likelihood function of the Majority Proportion Estimator]
\label{lemma:likelihood_maj_prop}
Let $x$ be a fixed sampling location at which the oracle is queried $K\ge 2$ times and $B_{K}$  the number of positive values observed in the random sample $Y_{1:K}(x)$. Then, the likelihood function of the majority proportion estimator, $\bar{p}(B_{K}):= \max\{B_{K}/K,1-B_{K}/K\}$, in $p$ is given by~\eqref{eq:P-likelihood}. \end{lemma}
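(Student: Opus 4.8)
The plan is a direct computation of the law of $\bar p(B_K)$ under the re-parametrization $p=\max\{p^{+},1-p^{+}\}$, where $p^{+}=\mathbb{P}(Y(x)=+1)$ and $B_K\sim\mathsf{Bin}(K,p^{+})$. First I would check that the object $\mathbb{P}_p(\bar p(B_K)=\cdot)$ is well posed: since $B_K\sim\mathsf{Bin}(K,p^{+})$ implies $K-B_K\sim\mathsf{Bin}(K,1-p^{+})$, and the statistic $b\mapsto\bar p(b)=\max\{b,K-b\}/K$ is invariant under $b\mapsto K-b$, the distribution of $\bar p(B_K)$ is unchanged when $p^{+}$ is replaced by $1-p^{+}$. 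Hence it depends on $p^{+}$ only through $p=\max\{p^{+},1-p^{+}\}$, and we may (and do) carry out the computation in the case $p^{+}=p$, i.e.~$B_K\sim\mathsf{Bin}(K,p)$.

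Next I would decompose the level sets of $\bar p$. The achievable values of $\bar p(B_K)$ are $v=j/K$ with $j\in\{\lceil K/2\rceil,\dots,K\}$, and the fiber over such a $v$ is $\{B_K=j\}\cup\{B_K=K-j\}$. For $v>1/2$ (equivalently $j\neq K-j$) these two events are disjoint, so
\[
\mathbb{P}_p\bigl(\bar p(B_K)=j/K\bigr)=\mathbb{P}_p(B_K=j)+\mathbb{P}_p(B_K=K-j)=\mathsf{Bin}(j;K,p)+\mathsf{Bin}(K-j;K,p).
\]
Using $\binom{K}{K-j}=\binom{K}{j}$ I would rewrite $\mathsf{Bin}(K-j;K,p)=\binom{K}{j}p^{K-j}(1-p)^{j}=\mathsf{Bin}(j;K,1-p)$, which yields the first branch of~\eqref{eq:P-likelihood}. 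Note this expression is symmetric in $j\leftrightarrow K-j$, so the two labels $j/K$ and $(K-j)/K$ for the same level set give the same probability; this is why the formula can equivalently be indexed by the ``minority count'' $j\in\{0,\dots,\lceil K/2\rceil-1\}$ as in the statement.

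The only point requiring care is the degenerate fiber. When $K$ is even, the value $v=1/2$, i.e.~$j=\lceil K/2\rceil=K/2$, corresponds to the single outcome $\{B_K=K/2\}$ (since then $j=K-j$), so no doubling occurs and $\mathbb{P}_p(\bar p(B_K)=1/2)=\mathsf{Bin}(K/2;K,p)$, the second branch; when $K$ is odd there is no tie and only the first branch is used. I would close with the sanity check that the two branches sum to $1$ over the achievable range, which follows by re-indexing back to $\sum_{b=0}^{K}\mathsf{Bin}(b;K,p)=1$. The main ``obstacle'' here is purely bookkeeping --- correctly matching the index $j$ to the level set of the $\max$-statistic and isolating the tie case $B_K=K/2$ --- since the underlying probability computation with binomial pmf's is elementary.
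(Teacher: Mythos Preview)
Your argument is correct and follows essentially the same route as the paper's own proof: both decompose the fiber $\{\bar p(B_K)=j/K\}=\{B_K=j\}\cup\{B_K=K-j\}$, sum the two binomial masses, and isolate the tie case $B_K=K/2$. Your preliminary check that the law of $\bar p(B_K)$ is invariant under $p^{+}\mapsto 1-p^{+}$ (hence genuinely a function of $p$) is a useful clarification the paper leaves implicit, and your remark about the $j\leftrightarrow K-j$ symmetry neatly reconciles the paper's indexing by the minority count with the natural indexing by the achievable values of $\bar p$.
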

\begin{proof}
Given $B_{K}\sim\mathsf{Bin}(K,p^{+})$ and $p^{+}(x) := \mathbb{P}(Y(x) = +1)= p(x) 1_{\{ x^* \le x\}} + (1-p(x)) 1_{\{x^* > x\}}$ for $x\in (0,1)$, we have
\begin{align*}
\mathbb{P}_{p}(\bar{p}(B_{K}) = j/K) &:= \mathbb{P}_{p}(\max\{B_{K}/K,1-B_{K}/K\} = j/K) \\
&= \mathbb{P}_{p^{+}}(B_{K} = j) + \mathbb{P}_{p^{+}}(B_{K} = K-j) \\ 
&= \textstyle  {K \choose j} (p^{+})^{j}(1-p^{+})^{K-j} + {K \choose K-j} (p^{+})^{K-j}(1-p^{+})^{j},   \quad \forall j = 0,1,\ldots,\ceil{K/2}-1;
\end{align*}
which is the sum of two Binomial densities. Finally, if $j=k/2$ then
$ \mathbb{P}_{p}(\bar{p}(B_{K}) = 1/2) = \mathbb{P}_{p^{+}}(B_{K} = K/2)$ which is a single binomial density.
\end{proof}

\begin{proof}[Proof of Theorem \ref{thm:posterior_p_pk}]
\label{proof:posterior_p_pk}
\begin{align*}
\pi(p|\bar{p}(B_{K}) = j/K) &\propto \mathbb{P}_{p}(B_{K} = j) \pi_{0}(p) \\
&\propto 
\textstyle  {K \choose j}[p^{j}(1-p)^{K-j} +(1-p)^{j}p^{K-j}]1_{(1/2,1)}(p), & \mbox{if $j=0,1,\ldots, (\ceil{K/2}-1)$;}	
\end{align*}
with the normalizing constant $\beta_{1} = \int_{1/2}^{1} \left[p^{j}(1-p)^{K-j} + (1-p)^{j}p^{K-j}\right] dp$
which can be expressed in terms of the Beta function.
\end{proof}

\begin{corollary}
\label{cor:posterior_mean}
The posterior mean $	\hat{p}_{\mathscr{L}_{2}}(j/K):= \mathbb{E}^{p}_{B_{K}}[p|\bar{p}(B_{K})=j/K]$ is given by
\begin{itemize}
\item If $j=0,1,\ldots,(\ceil{K/2}-1)/K$, then
\begin{align*}
\hat{p}_{\mathscr{L}_{2}}(j/K) &:= \textstyle
\beta_{1}^{-1}\left\{B(j+2,K-j+1)(1-\int_{0}^{1/2}\mathsf{Beta}(p; j+2,K-j+1))dp \right.\\
&\left.\quad  \textstyle  + \  B(K-j+2,j+1)(1-\int_{0}^{1/2}\mathsf{Beta}(p;(K-j+2,j+1)dp\right\}.
\end{align*}

\item If $j=K/2$, then
\[
\hat{p}_{\mathscr{L}_{2}}(j/K) = \textstyle
\frac{\left\{B(K/2+2,K/2+1)(1-\int_{0}^{1/2}\mathsf{Beta}(p; K/2+2,K/2+1))dp \right\} }{B(K/2+1,K/2+1)[1-\int_{0}^{1/2}\mathsf{Beta}(p; K/2+1,K/2+1)dp]}.
\]
\end{itemize}

\end{corollary}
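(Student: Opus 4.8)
The plan is to recognize the posterior density from Theorem~\ref{thm:posterior_p_pk} as a finite mixture of Beta kernels truncated to $[1/2,1]$, and then compute $\hat{p}_{\mathscr{L}_{2}}(j/K)=\int_{1/2}^{1}p\,\pi(p\mid j/K)\,dp$ by integrating term by term, using the elementary identity $p^{a-1}(1-p)^{b-1}=B(a,b)\,\mathsf{Beta}(p;a,b)$ together with $\int_{1/2}^{1}\mathsf{Beta}(p;a,b)\,dp = 1-\int_{0}^{1/2}\mathsf{Beta}(p;a,b)\,dp$.

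First I would treat the generic case $j=0,1,\ldots,\ceil{K/2}-1$. Writing each monomial in \eqref{eq:Posterior_pdf_p} as a Beta density, $p^{j}(1-p)^{K-j}=B(j+1,K-j+1)\,\mathsf{Beta}(p;j+1,K-j+1)$ and $(1-p)^{j}p^{K-j}=B(K-j+1,j+1)\,\mathsf{Beta}(p;K-j+1,j+1)$, the normalizing constant is exactly the quantity $\beta_1$ already isolated in the proof of Theorem~\ref{thm:posterior_p_pk}, namely
\[
\beta_1 = B(j+1,K-j+1)\!\int_{1/2}^{1}\!\!\mathsf{Beta}(p;j+1,K-j+1)\,dp + B(K-j+1,j+1)\!\int_{1/2}^{1}\!\!\mathsf{Beta}(p;K-j+1,j+1)\,dp.
\]
For the numerator, multiplying by $p$ raises the \emph{first} Beta shape parameter by one: $p\cdot p^{j}(1-p)^{K-j}=B(j+2,K-j+1)\,\mathsf{Beta}(p;j+2,K-j+1)$ and $p\cdot(1-p)^{j}p^{K-j}=B(K-j+2,j+1)\,\mathsf{Beta}(p;K-j+2,j+1)$; integrating each over $[1/2,1]$ and substituting $\int_{1/2}^{1}=1-\int_{0}^{1/2}$ produces the two bracketed terms, and dividing by $\beta_1$ gives the stated formula.

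The boundary case $j=K/2$ is handled identically, but now with a single symmetric kernel $\mathsf{Beta}(p;K/2+1,K/2+1)$: the normalizer is $B(K/2+1,K/2+1)\bigl[1-\int_{0}^{1/2}\mathsf{Beta}(p;K/2+1,K/2+1)\,dp\bigr]$, while multiplying by $p$ yields a numerator proportional to $B(K/2+2,K/2+1)\bigl[1-\int_{0}^{1/2}\mathsf{Beta}(p;K/2+2,K/2+1)\,dp\bigr]$, and taking the ratio gives the claimed expression.

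There is no substantive obstacle: this is a bookkeeping exercise in Beta-function identities. The only point requiring care is keeping the truncation to $[1/2,1]$ straight — equivalently the regularized incomplete Beta terms $\int_{0}^{1/2}\mathsf{Beta}$ — and remembering that premultiplying the kernel by $p$ shifts only the first shape parameter, so the incomplete-Beta corrections appearing in the numerator and in $\beta_1$ involve genuinely different parameters and do not telescope away.
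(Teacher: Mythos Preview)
Your derivation is correct and is precisely the natural computation: write the posterior kernel from Theorem~\ref{thm:posterior_p_pk} as a sum of Beta kernels on $[1/2,1]$, multiply by $p$ to shift the first shape parameter up by one, and express each truncated integral via $\int_{1/2}^{1}=1-\int_{0}^{1/2}$. The paper in fact states Corollary~\ref{cor:posterior_mean} without proof (it is given as an immediate consequence of Theorem~\ref{thm:posterior_p_pk} and the observation there that $\beta_1$ ``can be expressed in terms of the Beta function''), so there is nothing further to compare against; your write-up supplies exactly the bookkeeping the paper omits.
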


\begin{lemma}\label{prop:bias_p} We have that the \textit{bias} of of the majority proportion estimator $\bar{p} = \bar{p}(B_{K})$ given $p$ is
\begin{equation}
\label{eq:bias_p}
Bias_{p}(\bar{p}) := \mathbb{E}_{p}^{B_{K}}[\bar{p} - p] = \mathbb{P}(B_{K} \leq \ceil{K/2}-1)
-2p \mathbb{P}(B_{K-1} \leq \ceil{K/2} - 2) > 0 , K\geq 3
\end{equation}
\end{lemma}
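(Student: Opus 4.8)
The plan is to obtain~\eqref{eq:bias_p} by a direct moment computation and then deduce strict positivity by a separate convexity argument. Since $\bar p(b)=\tfrac1K\max(b,K-b)$ is symmetric about $b=K/2$, we have $\bar p(B_{K})=\bar p(K-B_{K})$ pointwise, and $K-B_{K}\sim\mathsf{Bin}(K,1-p^{+})$; hence the law of $\bar p(B_{K})$ is the same under $p^{+}=p$ and under $p^{+}=1-p$, so I may assume $B_{K}\sim\mathsf{Bin}(K,p)$ throughout. Write $m:=\ceil{K/2}$. The first step is the pointwise identity $\bar p(B_{K})=\tfrac{B_{K}}{K}+\tfrac1K(K-2B_{K})_{+}$ together with the observation that $(K-2B_{K})_{+}=(K-2B_{K})\,1_{\{B_{K}\le m-1\}}$ for every $K$ (when $K$ is even the borderline value $B_{K}=K/2$ contributes zero, so its classification is immaterial). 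Taking $\E_{p}$ and using $\E_{p}[B_{K}]=Kp$ gives $\E_{p}[\bar p]=p+\PP(B_{K}\le m-1)-\tfrac2K\,\E_{p}[B_{K}\,1_{\{B_{K}\le m-1\}}]$.

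The remaining expectation is evaluated with the Binomial identity $j\binom{K}{j}=K\binom{K-1}{j-1}$: re-indexing the sum gives $\E_{p}[B_{K}\,1_{\{B_{K}\le m-1\}}]=Kp\,\PP(B_{K-1}\le m-2)$ with $B_{K-1}\sim\mathsf{Bin}(K-1,p)$, and substituting back yields exactly $Bias_{p}(\bar p)=\PP(B_{K}\le m-1)-2p\,\PP(B_{K-1}\le m-2)$. For strict positivity I would not argue from this closed form: written out it is a difference of two probabilities of opposite sign and is not visibly positive, and trying to make it positive via the coupling $B_{K}=B_{K-1}+X$ with $X\sim\mathsf{Bernoulli}(p)$ leaves one to lower-bound an expression of the shape $(1-2p)\PP(B_{K-1}\le m-2)+(1-p)\PP(B_{K-1}=m-1)$, which is the genuinely awkward point for general $K$.

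Instead I would use the equivalent representation $\bar p(B_{K})=\tfrac12+\tfrac1K|B_{K}-\tfrac K2|$, together with $p=\tfrac12+\tfrac1K(\E_{p}[B_{K}]-\tfrac K2)$ and $\E_{p}[B_{K}]-\tfrac K2=K(p-\tfrac12)>0$, so that $Bias_{p}(\bar p)=\tfrac1K(\E_{p}|B_{K}-\tfrac K2|-|\E_{p}[B_{K}-\tfrac K2]|)$. This is nonnegative by the triangle inequality, and it is strict because $B_{K}-\tfrac K2$ takes values of both signs ($\PP(B_{K}=0)=(1-p)^{K}>0$ places it strictly below $K/2$ and $\PP(B_{K}=K)=p^{K}>0$ strictly above), so equality in the triangle inequality fails; this in fact already holds for $K\ge 2$, so the stated hypothesis $K\ge 3$ is harmless. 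The formula part is thus routine once the decomposition of $\bar p$ and the identity $j\binom{K}{j}=K\binom{K-1}{j-1}$ are in hand; the only real obstacle is the positivity, and the key realization there is that the $\max$ defining $\bar p$ turns the bias into a Jensen gap, positive whenever $B_{K}$ is not concentrated on one side of $K/2$.
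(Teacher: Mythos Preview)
Your derivation of the closed form is essentially the paper's: split $\max(B_{K},K-B_{K})$ into the two cases $B_{K}\ge\ceil{K/2}$ and $B_{K}<\ceil{K/2}$, rearrange to isolate $\E_{p}[B_{K}]$, and then reduce $\E_{p}[B_{K}\,1_{\{B_{K}<\ceil{K/2}\}}]$ via $j\binom{K}{j}=K\binom{K-1}{j-1}$. Your decomposition $\bar p(B_{K})=B_{K}/K+(K-2B_{K})_{+}/K$ is just a compact packaging of the same case split, and the symmetry observation that lets you take $B_{K}\sim\mathsf{Bin}(K,p)$ is implicit in the paper as well.

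Where you differ is the strict positivity. The paper's proof in fact stops after the formula and never establishes the ``$>0$''; your Jensen argument via $\bar p(B_{K})=\tfrac12+\tfrac1K|B_{K}-K/2|$, giving $Bias_{p}(\bar p)=\tfrac1K\bigl(\E_{p}|B_{K}-K/2|-|\E_{p}[B_{K}-K/2]|\bigr)$, is a genuine addition and a clean way to close that gap. It is also sharper about the hypotheses: it shows the bias is strictly positive for every $p\in(0,1)$ and $K\ge2$, and identifies the only failure case $p\in\{0,1\}$. Trying to extract positivity directly from the closed form, as you note, is unpleasant; recognizing the bias as a Jensen gap for the convex map $b\mapsto|b-K/2|$ is the right move.
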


\begin{proof}
For brevity, we drop the dependency on $x$. 
\begin{align}\notag
\mathbb{E}_{p}[\bar{p}(B_{K})] &:= \mathbb{E}_{p}[\max\{B_{K}/K, 1- B_{K}/K\}]\\ \notag
&= \textstyle  \frac{1}{K}\left\{\mathbb{E}_{p^{+}}^{B_{K}}[B_{K}1_{\{B_{K}\geq \ceil{K/2}\}}] + \mathbb{E}_{p^{+}}^{B_{K}}[(K-B_{K})1_{\{B_{K} < \ceil{K/2}\}}] \right\} \\ \notag
&= \textstyle \frac{1}{K}\left\{\mathbb{E}_{p^{+}}^{B_{K}}[B_{K}] + K\mathbb{P}_{p^{+}}[B_{K} < \ceil{K/2}]
-2\mathbb{E}_{p^{+}}^{B_{K}}[B_{K}1\{B_{K} < \ceil{K/2}\}] \right\} \\ \label{eq:bias-1}
&=\textstyle  p+ \mathbb{P}_{p^{+}}(B_{K} < \ceil{K/2})
-\frac{2}{K}\mathbb{E}_{p^{+}}\left[B_{K}1_{\{B_{K} < \ceil{K/2}\}} \right].
\end{align}
The last term is equal to
\begin{align*}
\mathbb{E}_{p^{+}} \Bigl[B_{K}1_{\{B_{K} < \ceil{K/2}\}} \Bigr] &= \textstyle  \sum_{i=1}^{\ceil{K/2}-1}i {K \choose i}p^{i} (1-p)^{K-i} \\
&= \textstyle Kp\sum_{i=1}^{\ceil{K/2}-1}{K-1 \choose i-1}p^{i-1} (1-p)^{(K-1)-(i-1)} \\ 
&= \textstyle  Kp \mathbb{P}_{p^{+}}(B_{K-1} \leq \ceil{K/2} - 2).
\end{align*}
Substituting the latter quantity into \eqref{eq:bias-1} and using $Bias_{p}(\bar{p}(B_{K})):= p-\mathbb{E}_{p^{+}}[\bar{p}(B_{K})]$ yields \eqref{eq:bias_p}.

\end{proof}

\subsection{Empirical Results for Exponential and Cubic Test Functions}
\label{sub:ExponentialCubicResults}

Tables~\ref{tab:sampling-comp-mc-h2} and \ref{tab:sampling-comp-mc-h3} show four G-PBA sampling policies  implemented using six estimation schemes for $\hat{p}(\cdot)$ and batch size  for both $h_{2}$ and $h_{2}$ test functions, respectively. For each of these combinations, three performance statistics are computed (columns): absolute residuals, $\hat{r}_{K}^{\eta}(f_{T})$, length of the 95\% CI, $\hat{l}_{K,0.95}^{\eta}(f_{T})$; as well as its coverage $\hat{c}_{K,0.95}^{\eta}(f_{T})$, using 1,000 MC macro-replicates. Furthermore, the TPO policy is also included with $\alpha \in \{0.05,0.40\}$  (last rows). 

\begin{table}[H]
	\footnotesize
	\caption{MC performance statistics (expressed in $1\times 10^{-2}$ units) for G-PBAs for the test function $h_{2}$.}
	\label{tab:sampling-comp-mc-h2}
	\centering
	\begin{tabular}{c|c|rrr|rrr|rrr}
		\hline
		\multirow{ 2 }{*}{$\eta$} & \multirow{ 2 }{*}{$\hat{p}$} & \multicolumn{3}{c|}{$\hat{r}_{K}^{\eta}(f_{T})$ ($10^{-2}$)}& \multicolumn{3}{c|}{$\hat{l}_{K,0.95}^{\eta}(f_{T})$ ($10^{-2}$)}& \multicolumn{3}{c}{$\hat{c}_{K,0.95}^{\eta}(f_{T})$ (in \%)}\\
		& & $K=50$ & $K=250$ & $K=500$ & $K=50$ & $K=250$ & $K=500$ & $K=50$ & $K=250$ & $K=500$  \\
		\hline\parbox[t]{1mm}{\multirow{6}{*}{\rotatebox[origin=c]{90}{Det-IDS}}} & \multirow{ 1 }{*}{ $\bar{p}$ } & 0.2897 & 0.1554 & 0.1893 & 0.0034 & 0.0754 & 0.5755 & 0.20 & 12.00 & 70.80 \\
		& \multirow{ 1 }{*}{ $\hat{p}_{\mathscr{L}_{0}}$ } & 0.2953 & 0.1674 & 0.1827 & 0.0047 & 0.1068 & 0.6875 & 0.10 & 18.30 & 78.90 \\
		& \multirow{ 1 }{*}{ $\hat{p}_{\mathscr{L}_{1}}$ } & 0.2813 & 0.1628 & 0.1970 & 0.0019 & 0.0758 & 0.5631 & 0.30 & 11.60 & 68.90 \\
		& \multirow{ 1 }{*}{ $\hat{p}_{\mathscr{L}_{2}}$ } & 0.2903 & 0.1587 & 0.1997 & 0.0011 & 0.0631 & 0.5462 & 0.10 & 8.80 & 70.50 \\
		& \multirow{ 1 }{*}{ $\mathscr{P}_{\mathscr{M}}$ } & 0.2402 & 0.1305 & 0.1616 & 0.0020 & 0.1069 & 0.4433 & 0.20 & 20.10 & 67.60 \\
		& \multirow{ 1 }{*}{ $\mathscr{P}_{\mathscr{S}}$ } & 0.1657 & 0.1020 & 0.1229 & 0.0013 & 0.1175 & 0.4526 & 0.10 & 29.80 & 80.60 \\
		\cline{1-11}\parbox[t]{1mm}{\multirow{6}{*}{\rotatebox[origin=c]{90}{Rand-IDS}}} & \multirow{ 1 }{*}{ $\bar{p}$ } & 0.2816 & 0.1562 & 0.1891 & 0.0045 & 0.1096 & 0.6098 & 0.20 & 15.60 & 63.50 \\
		& \multirow{ 1 }{*}{ $\hat{p}_{\mathscr{L}_{0}}$ } & 0.2672 & 0.1713 & 0.1958 & 0.0013 & 0.1257 & 0.7220 & 0.00 & 19.50 & 69.60 \\
		& \multirow{ 1 }{*}{ $\hat{p}_{\mathscr{L}_{1}}$ } & 0.2683 & 0.1595 & 0.1871 & 0.0051 & 0.0868 & 0.6382 & 0.30 & 12.30 & 62.30 \\
		& \multirow{ 1 }{*}{ $\hat{p}_{\mathscr{L}_{2}}$ } & 0.2823 & 0.1618 & 0.1908 & 0.0011 & 0.0770 & 0.5862 & 0.00 & 12.00 & 59.80 \\
		& \multirow{ 1 }{*}{ $\mathscr{P}_{\mathscr{M}}$ } & 0.2694 & 0.1464 & 0.1617 & 0.0041 & 0.1201 & 0.4917 & 0.20 & 16.70 & 58.30 \\
		& \multirow{ 1 }{*}{ $\mathscr{P}_{\mathscr{S}}$ } & 0.1555 & 0.1044 & 0.1259 & 0.0087 & 0.1460 & 0.4928 & 1.10 & 32.60 & 71.20 \\
		\cline{1-11}\parbox[t]{1mm}{\multirow{6}{*}{\rotatebox[origin=c]{90}{Rand-Q}}} & \multirow{ 1 }{*}{ $\bar{p}$ } & 0.2413 & 0.1394 & 0.1256 & 0.0042 & 0.0362 & 0.1627 & 0.20 & 6.70 & 29.10 \\
		& \multirow{ 1 }{*}{ $\hat{p}_{\mathscr{L}_{0}}$ } & 0.2541 & 0.1491 & 0.1463 & 0.0050 & 0.0648 & 0.2719 & 0.30 & 9.70 & 42.10 \\
		& \multirow{ 1 }{*}{ $\hat{p}_{\mathscr{L}_{1}}$ } & 0.2524 & 0.1295 & 0.1234 & 0.0018 & 0.0304 & 0.1795 & 0.10 & 6.30 & 30.20 \\
		& \multirow{ 1 }{*}{ $\hat{p}_{\mathscr{L}_{2}}$ } & 0.2472 & 0.1319 & 0.1242 & 0.0025 & 0.0273 & 0.1378 & 0.20 & 4.30 & 25.90 \\
		& \multirow{ 1 }{*}{ $\mathscr{P}_{\mathscr{M}}$ } & 0.2785 & 0.1279 & 0.1261 & 0.0017 & 0.0583 & 0.2132 & 0.10 & 10.40 & 34.40 \\
		& \multirow{ 1 }{*}{ $\mathscr{P}_{\mathscr{S}}$ } & 0.1455 & 0.0942 & 0.0920 & 0.0014 & 0.0874 & 0.2609 & 0.30 & 22.30 & 58.00 \\
		\cline{1-11}\parbox[t]{1mm}{\multirow{6}{*}{\rotatebox[origin=c]{90}{Syst-Q}}} & \multirow{ 1 }{*}{ $\bar{p}$ } & 0.2752 & 0.1401 & 0.1236 & 0.0001 & 0.0303 & 0.1360 & 0.00 & 4.20 & 24.00 \\
		& \multirow{ 1 }{*}{ $\hat{p}_{\mathscr{L}_{0}}$ } & 0.2902 & 0.1539 & 0.1335 & 0.0042 & 0.0384 & 0.2055 & 0.10 & 5.60 & 32.60 \\
		& \multirow{ 1 }{*}{ $\hat{p}_{\mathscr{L}_{1}}$ } & 0.2704 & 0.1401 & 0.1247 & 0.0052 & 0.0309 & 0.1416 & 0.30 & 4.40 & 22.90 \\
		& \multirow{ 1 }{*}{ $\hat{p}_{\mathscr{L}_{2}}$ } & 0.2741 & 0.1384 & 0.1220 & 0.0019 & 0.0215 & 0.0998 & 0.20 & 2.10 & 18.00 \\
		& \multirow{ 1 }{*}{ $\mathscr{P}_{\mathscr{M}}$ } & 0.3067 & 0.1394 & 0.1170 & 0.0001 & 0.0251 & 0.1568 & 0.00 & 3.10 & 24.00 \\
		& \multirow{ 1 }{*}{ $\mathscr{P}_{\mathscr{S}}$ } & 0.1614 & 0.0905 & 0.0889 & 0.0011 & 0.0508 & 0.2043 & 0.20 & 11.80 & 46.40 \\
		\hline
		\parbox[t]{1mm}{\multirow{2}{*}{\rotatebox[origin=c]{90}{TPO}}} & $\tilde{p}_{0.05}$ & \multicolumn{3}{c|}{  0.6873}& \multicolumn{3}{c|}{ 5.2490}& \multicolumn{3}{c}{26.9}\\
		& $\tilde{p}_{0.40}$ & \multicolumn{3}{c|}{ 1.8503}& \multicolumn{3}{c|}{ 34.9012}& \multicolumn{3}{c}{87.5}\\
		\hline
	\end{tabular}
\end{table}

\begin{table}[htb]
	\footnotesize
	\caption{MC performance statistics (expressed in $1\times 10^{-2}$ units)  for the test function $h_{3}$. }
	\label{tab:sampling-comp-mc-h3}
	\centering
	\begin{tabular}{c|c|rr|rr|rr}
		\hline
		\multirow{ 2 }{*}{$\eta$} & \multirow{ 2 }{*}{$\hat{p}$} & \multicolumn{2}{c|}{$\hat{r}_{K}^{\eta}(f_{T})$ ($10^{-2}$)}& \multicolumn{2}{c|}{$\hat{l}_{K,0.95}^{\eta}(f_{T})$ ($10^{-2}$)}& \multicolumn{2}{c}{$\hat{c}_{K,0.95}^{\eta}(f_{T})$ (in \%)}\\
		&  & $K=250$ & $K=500$ & $K=250$ & $K=500$  & $K=250$ & $K=500$  \\
		\cline{1-8} \parbox[t]{1mm}{\multirow{6}{*}{\rotatebox[origin=c]{90}{Det-IDS}}}  & \multirow{ 1 }{*}{ $\bar{p}$ } & 5.0529 & 4.7236 & 0.4160 & 2.8673 & 2.10 & 16.80 \\
		& \multirow{ 1 }{*}{ $\hat{p}_{\mathscr{L}_{0}}$ } & 5.1432 & 4.9775 & 0.7467 & 3.5501 & 3.60 & 22.40 \\
		& \multirow{ 1 }{*}{ $\hat{p}_{\mathscr{L}_{1}}$ } & 5.0229 & 4.8121 & 0.3879 & 2.7467 & 2.00 & 17.90 \\
		& \multirow{ 1 }{*}{ $\hat{p}_{\mathscr{L}_{2}}$ } & 5.0077 & 4.8966 & 0.3384 & 2.4650 & 1.70 & 14.00 \\
		& \multirow{ 1 }{*}{ $\mathscr{P}_{\mathscr{M}}$ } & 4.7156 & 4.6418 & 0.9552 & 4.4379 & 6.30 & 28.50 \\
		& \multirow{ 1 }{*}{ $\mathscr{P}_{\mathscr{S}}$ } & 4.2941 & 4.1632 & 1.4502 & 5.8494 & 9.50 & 39.90 \\
		\cline{1-8} \parbox[t]{1mm}{\multirow{6}{*}{\rotatebox[origin=c]{90}{Rand-IDS}}}  & \multirow{ 1 }{*}{ $\bar{p}$ } & 4.8830 & 4.7830 & 0.6196 & 3.8335 & 3.60 & 20.30 \\
		& \multirow{ 1 }{*}{ $\hat{p}_{\mathscr{L}_{0}}$ } & 5.0229 & 4.8439 & 1.0181 & 5.0343 & 5.20 & 27.30 \\
		& \multirow{ 1 }{*}{ $\hat{p}_{\mathscr{L}_{1}}$ } & 4.9485 & 4.7307 & 0.5929 & 3.4080 & 2.80 & 19.20 \\
		& \multirow{ 1 }{*}{ $\hat{p}_{\mathscr{L}_{2}}$ } & 5.0483 & 4.7806 & 0.4014 & 3.5384 & 2.00 & 19.10 \\
		& \multirow{ 1 }{*}{ $\mathscr{P}_{\mathscr{M}}$ } & 4.8148 & 4.7400 & 1.1672 & 4.8150 & 6.00 & 26.70 \\
		& \multirow{ 1 }{*}{ $\mathscr{P}_{\mathscr{S}}$ } & 4.2613 & 4.2927 & 2.1320 & 6.8282 & 12.90 & 40.20 \\
		\cline{1-8} \parbox[t]{1mm}{\multirow{6}{*}{\rotatebox[origin=c]{90}{Rand-Q}}}  & \multirow{ 1 }{*}{ $\bar{p}$ } & 4.6976 & 4.6077 & 0.4251 & 2.4817 & 2.00 & 14.80 \\
		& \multirow{ 1 }{*}{ $\hat{p}_{\mathscr{L}_{0}}$ } & 4.7730 & 4.5869 & 0.6449 & 3.4914 & 3.60 & 20.10 \\
		& \multirow{ 1 }{*}{ $\hat{p}_{\mathscr{L}_{1}}$ } & 4.7619 & 4.4558 & 0.4583 & 1.8682 & 2.20 & 11.30 \\
		& \multirow{ 1 }{*}{ $\hat{p}_{\mathscr{L}_{2}}$ } & 4.7406 & 4.5710 & 0.2861 & 1.8134 & 1.70 & 10.60 \\
		& \multirow{ 1 }{*}{ $\mathscr{P}_{\mathscr{M}}$ } & 4.6666 & 4.5163 & 0.4972 & 2.6578 & 3.10 & 13.80 \\
		& \multirow{ 1 }{*}{ $\mathscr{P}_{\mathscr{S}}$ } & 4.0224 & 3.8971 & 1.3335 & 4.7615 & 8.50 & 32.40 \\
		\cline{1-8} \parbox[t]{1mm}{\multirow{6}{*}{\rotatebox[origin=c]{90}{Syst-Q}}}  & \multirow{ 1 }{*}{ $\bar{p}$ } & 4.8925 & 4.4606 & 0.1828 & 1.8556 & 0.70 & 10.10 \\
		& \multirow{ 1 }{*}{ $\hat{p}_{\mathscr{L}_{0}}$ } & 4.9710 & 4.7632 & 0.3906 & 2.6305 & 2.30 & 14.40 \\
		& \multirow{ 1 }{*}{ $\hat{p}_{\mathscr{L}_{1}}$ } & 4.9345 & 4.4922 & 0.2637 & 1.4268 & 1.30 & 7.90 \\
		& \multirow{ 1 }{*}{ $\hat{p}_{\mathscr{L}_{2}}$ } & 4.9454 & 4.6179 & 0.2593 & 1.0887 & 1.00 & 6.60 \\
		& \multirow{ 1 }{*}{ $\mathscr{P}_{\mathscr{M}}$ } & 4.8032 & 4.5485 & 0.2690 & 1.7403 & 1.70 & 8.40 \\
		& \multirow{ 1 }{*}{ $\mathscr{P}_{\mathscr{S}}$ } & 4.0632 & 3.9204 & 0.7132 & 3.3071 & 4.10 & 21.80 \\
		\hline
		\parbox[t]{1mm}{\multirow{2}{*}{\rotatebox[origin=c]{90}{TPO}}} & $\tilde{p}_{0.05}$ & \multicolumn{2}{c|}{   5.1850}& \multicolumn{2}{c|}{ 45.5564}& \multicolumn{2}{c}{94.1}\\
		& $\tilde{p}_{0.40}$ & \multicolumn{2}{c|}{13.1249}& \multicolumn{2}{c|}{ 48.3949}& \multicolumn{2}{c}{55.0}\\
		\hline
	\end{tabular}
\end{table}

\bibliography{mybibArxiv}{}

\begin{thebibliography}{10}

\bibitem{ankennman:nelson:staum:2010}
{\sc B.~E. Ankenman, B.~L. Nelson, and J.~Staum}, {\em Stochastic kriging for
  simulation metamodeling}, Operations Research, 58 (2010), pp.~371--382.

\bibitem{AzzimontiBect16}
{\sc D.~Azzimonti, J.~Bect, C.~Chevalier, and D.~Ginsbourger}, {\em
  {Quantifying uncertainties on excursion sets under a Gaussian random field
  prior}}, SIAM/ASA J. Uncertainty Quantification, 4 (2016), pp.~850--874.

\bibitem{Picheny12}
{\sc J.~Bect, D.~Ginsbourger, L.~Li, V.~Picheny, and E.~Vazquez}, {\em
  Sequential design of computer experiments for the estimation of a probability
  of failure}, Statistics and Computing, 22 (2012), pp.~773--793.

\bibitem{brochu2010tutorial}
{\sc E.~Brochu, V.~M. Cora, and N.~De~Freitas}, {\em A tutorial on bayesian
  optimization of expensive cost functions, with application to active user
  modeling and hierarchical reinforcement learning}, arXiv preprint
  arXiv:1012.2599,  (2010).

\bibitem{chaloner1995Bayesian}
{\sc K.~Chaloner and I.~Verdinelli}, {\em Bayesian experimental design: A
  review}, Statistical Science,  (1995), pp.~273--304.

\bibitem{chen2006stochastic}
{\sc H.-F. Chen}, {\em Stochastic approximation and its applications}, vol.~64,
  Springer Science \& Business Media, 2006.

\bibitem{ChevalierPicheny13}
{\sc C.~Chevalier, J.~Bect, D.~Ginsbourger, E.~Vazquez, V.~Picheny, and
  Y.~Richet}, {\em Fast parallel kriging-based stepwise uncertainty reduction
  with application to the identification of an excursion set}, Technometrics,
  56 (2014), pp.~455--465.

\bibitem{finney1952statistical}
{\sc D.~J. Finney et~al.}, {\em Statistical method in biological assay},
  vol.~8, JSTOR, 1952.

\bibitem{frazier2016probabilistic}
{\sc P.~I. Frazier, S.~G. Henderson, and R.~Waeber}, {\em Probabilistic
  bisection converges almost as quickly as stochastic approximation}, arXiv
  preprint arXiv:1612.03964,  (2016).

\bibitem{GramacyLee09}
{\sc R.~B. Gramacy and H.~K.~H. Lee}, {\em {Adaptive Design and Analysis of
  Supercomputer Experiments}}, Technometrics, 51 (2009), pp.~130--145.

\bibitem{gramacy2013sequential}
{\sc R.~B. Gramacy and M.~Ludkovski}, {\em Sequential design for optimal
  stopping problems}, SIAM Journal on Financial Mathematics, 6 (2015),
  pp.~748--775.

\bibitem{HernandezHoffman14}
{\sc J.~M. Hern{\'a}ndez-Lobato, M.~W. Hoffman, and Z.~Ghahramani}, {\em
  Predictive entropy search for efficient global optimization of black-box
  functions}, in Advances in neural information processing systems, 2014,
  pp.~918--926.

\bibitem{horstein1963sequential}
{\sc M.~Horstein}, {\em Sequential transmission using noiseless feedback},
  Information Theory, IEEE Transactions on, 9 (1963), pp.~136--143.

\bibitem{jedynak2012twenty}
{\sc B.~Jedynak, P.~I. Frazier, R.~Sznitman, et~al.}, {\em Twenty questions
  with noise: Bayes optimal policies for entropy loss}, Journal of Applied
  Probability, 49 (2012), pp.~114--136.

\bibitem{JonesSchonlauWelch98}
{\sc D.~Jones, M.~Schonlau, and W.~Welch}, {\em Efficient global optimization
  of expensive black-box functions}, Journal of Global optimization, 13 (1998),
  pp.~455--492.

\bibitem{joseph2002operating}
{\sc V.~R. Joseph and C.~Wu}, {\em Operating window experiments: a novel
  approach to quality improvement}, Journal of Quality Technology, 34 (2002),
  p.~345.

\bibitem{lam1994theoretical}
{\sc L.~Lam and C.~Y. Suen}, {\em A theoretical analysis of the application of
  majority voting to pattern recognition}, in Pattern Recognition, 1994. Vol.
  2-Conference B: Computer Vision \&amp; Image Processing., Proceedings of the
  12th IAPR International. Conference on, vol.~2, IEEE, 1994, pp.~418--420.

\bibitem{ludkovskiJCF}
{\sc M.~Ludkovski}, {\em Kriging metamodels and experimental design for
  bermudan option pricing}, Journal of Computational Finance, to Appear (2016).
\newblock arXiv preprint arXiv:1509.02179.

\bibitem{never1994ad}
{\sc B.~T. Never}, {\em Ad-optimality-based sensitivity test}, Technometrics,
  36 (1994), pp.~61--70.

\bibitem{Pasupathy:2011:SRP:1921598.1921603}
{\sc R.~Pasupathy and S.~Kim}, {\em The stochastic root-finding problem:
  Overview, solutions, and open questions}, ACM Trans. Model. Comput. Simul.,
  21 (2011), pp.~19:1--19:23.

\bibitem{PichenyGinsbourger13}
{\sc V.~Picheny, D.~Ginsbourger, Y.~Richet, and G.~Caplin}, {\em Quantile-based
  optimization of noisy computer experiments with tunable precision},
  Technometrics, 55 (2013), pp.~2--13.

\bibitem{RanjanBingham08}
{\sc P.~Ranjan, D.~Bingham, and G.~Michailidis}, {\em Sequential experiment
  design for contour estimation from complex computer codes}, Technometrics, 50
  (2008), pp.~527--541.

\bibitem{riihimaki2010gaussian}
{\sc J.~Riihim{\"a}ki and A.~Vehtari}, {\em Gaussian processes with
  monotonicity information.}, in AISTATS, vol.~9, 2010, pp.~645--652.

\bibitem{robbins1951stochastic}
{\sc H.~Robbins and S.~Monro}, {\em A stochastic approximation method}, The
  annals of mathematical statistics,  (1951), pp.~400--407.

\bibitem{russo2014learning}
{\sc D.~Russo and B.~Van~Roy}, {\em Learning to optimize via
  information-directed sampling}, in Advances in Neural Information Processing
  Systems, 2014, pp.~1583--1591.

\bibitem{russo2016information}
{\sc D.~Russo and B.~Van~Roy}, {\em An information-theoretic analysis of
  thompson sampling}, Journal of Machine Learning Research, 17 (2016),
  pp.~1--30.

\bibitem{siegmund1985sequential}
{\sc D.~Siegmund}, {\em Sequential analysis: tests and confidence intervals},
  Springer Science \& Business Media, 1985.

\bibitem{waeber2013probabilistic}
{\sc R.~Waeber}, {\em {Probabilistic Bisection Search For Stochastic
  Root-Finding}}, PhD thesis, Cornell University, 2013.

\bibitem{waeber2011bayesian}
{\sc R.~Waeber, P.~I. Frazier, and S.~G. Henderson}, {\em A bayesian approach
  to stochastic root finding}, in Simulation Conference (WSC), Proceedings of
  the 2011 Winter, IEEE, 2011, pp.~4033--4045.

\bibitem{waeber2013bisection}
\leavevmode\vrule height 2pt depth -1.6pt width 23pt, {\em Bisection search
  with noisy responses}, SIAM Journal on Control and Optimization, 51 (2013),
  pp.~2261--2279.

\end{thebibliography}
\bibliographystyle{siam}

\end{document}